\def\eqref#1{equation~\ref{#1}}
\def\1{\bm{1}}
\def\rvv{{\mathbf{v}}}
\def\vzero{{\bm{0}}}
\def\vone{{\bm{1}}}
\def\va{{\bm{a}}}
\def\ve{{\bm{e}}}
\def\vg{{\bm{g}}}
\def\vp{{\bm{p}}}
\def\vu{{\bm{u}}}
\def\vv{{\bm{v}}}
\def\vx{{\bm{x}}}
\def\vy{{\bm{y}}}
\def\vz{{\bm{z}}}
\def\mC{{\bm{C}}}
\def\mI{{\bm{I}}}
\DeclareMathAlphabet{\mathsfit}{\encodingdefault}{\sfdefault}{m}{sl}
\SetMathAlphabet{\mathsfit}{bold}{\encodingdefault}{\sfdefault}{bx}{n}
\newcommand{\R}{\mathbb{R}}
\newtheorem{lemma}{Lemma}
\newtheorem{proposition}{Proposition}
\newtheorem{corollary}{Corollary}
\newcommand{\norm}[1]{\left\lVert #1 \right\rVert}
\newcommand{\LN}{\mathrm{LN}}     % affine-free layer norm
\newcommand{\RMSN}{\mathrm{RMSN}}   % affine-free RMSNorm
\DeclareMathOperator{\Norm}{Norm}  % generic pre-norm operator
\newcommand{\normop}{\mathrm{Norm}} % normalization operator (LN or RMSNorm)
\newcommand{\para}[1]{\noindent\textbf{#1}\hspace{0.3em}}
\title{Directional Textual Inversion \\for Personalized Text-to-Image Generation}
\author{Kunhee Kim$^{1}$\thanks{Equal contributions.}~~, NaHyeon Park$^{1}$\footnotemark[1]~~, Kibeom Hong$^2$ \& Hyunjung Shim$^1$ \\
% Department of Computer Science\\
$^1$KAIST, $^2$Sookmyung Woman's University\\
% Pittsburgh, PA 15213, USA \\
\texttt{\{kunhee.kim,julia19,kateshim\}@kaist.ac.kr} \\
\texttt{kb.hong@sookmyung.ac.kr}
}
\begin{document}

\maketitle

\begin{abstract}
Textual Inversion (TI) is an efficient approach to text-to-image personalization but often fails on complex prompts. We trace these failures to embedding norm inflation: learned tokens drift to out-of-distribution magnitudes, degrading prompt conditioning in pre-norm Transformers. Empirically, we show semantics are primarily encoded by direction in CLIP token space, while inflated norms harm contextualization; theoretically, we analyze how large magnitudes attenuate positional information and hinder residual updates in pre-norm blocks. We propose Directional Textual Inversion (DTI), which fixes the embedding magnitude to an in-distribution scale and optimizes only direction on the unit hypersphere via Riemannian SGD. We cast direction learning as MAP with a von Mises-Fisher prior, yielding a constant-direction prior gradient that is simple and efficient to incorporate. Across personalization tasks, DTI improves text fidelity over TI and TI-variants while maintaining subject similarity. Crucially, DTI's hyperspherical parameterization enables smooth, semantically coherent interpolation between learned concepts (slerp), a capability that is absent in standard TI. Our findings suggest that direction-only optimization is a robust and scalable path for prompt-faithful personalization. Code is available at \url{https://github.com/kunheek/dti}.
\end{abstract}

\section{Introduction}

Personalization in text-to-image generation involves the targeted adaptation of models to learn representations of novel, user-provided concepts~\citep{gal_image_2023, ruiz_dreambooth_2023}.
This process allows for the creation of customized images that faithfully render specific concepts, such as unique individuals, objects, or artistic styles, in new contexts.

Current personalization approaches fall into two paradigms: parameter fine-tuning and embedding optimization.
Parameter fine-tuning methods, exemplified by DreamBooth~\citep{ruiz_dreambooth_2023}, optimize entire models using a few user-provided images.
While effective, these approaches are computationally expensive and require significant storage per concept.
In contrast, embedding optimization methods, such as Textual Inversion~\citep{gal_image_2023}, offer a more efficient alternative by optimizing only token embeddings.
This approach provides substantial advantages: minimal storage per concept and seamless workflow integration.
These advantages have made TI a foundational component in numerous personalization frameworks~\citep{hao_vico_2023, kumari_multi-concept_2023, tewel2023key, lee_direct_2024} and align with a broader paradigm shared with other domains, such as LLMs~\citep{lester2021power} and VLMs~\citep{alaluf2024myvlm}.

Despite its utility, TI suffers from critical limitations.
The fundamental challenge stems from the restrictive constraint of optimizing a single embedding vector to encapsulate highly complex and multifaceted visual concepts.
This limitation leads to two key problems.
First, TI struggles to maintain high fidelity to complex prompts, compromising its controllability and expressive range.
Second, the extensive fine-tuning duration required to learn each new concept hinders its practical applicability for rapid, user-driven workflows.
Recent works~\citep{voynov_p_2023, alaluf2023neural} have attempted to address these limitations through enriched embedding spaces, but introduce significant computational overhead that undermines TI's efficiency advantage.
Moreover, these existing methods merely treat the symptoms rather than directly addressing the underlying optimization dynamics of TI, leaving the fundamental geometric factors that govern semantic alignment in embedding-based personalization largely unclear.

To bridge this gap, this paper presents a systematic, interpretability-driven analysis of the optimization dynamics inherent to TI, with a specific focus on the geometric characteristics of the token embedding space.
By carefully dissecting the learned representations, our investigation reveals that semantic information is predominantly encoded in the direction of the embedding vectors. Furthermore, we demonstrate both theoretically and empirically that the unconstrained magnitude of these embeddings is a primary source of instability; specifically, excessively high embedding norms emerge during optimization and act as a critical factor impairing image-text alignment.

Building on these findings, we introduce \textbf{Directional Textual Inversion (DTI)}, a novel framework designed to address these fundamental limitations. 
Unlike conventional methods that optimize the entire token embedding, DTI decouples embeddings into their magnitude and directional components. Our approach maintains the embedding magnitude at a scale consistent with in-distribution tokens from the pre-trained model, while focusing the optimization exclusively on the embedding's direction. 
To enhance semantic coherence, we formulate this directional optimization as a Maximum a Posteriori (MAP) estimation problem. This formulation incorporates a von Mises-Fisher (vMF) distribution as a directional prior, which effectively regularizes the embedding towards semantically meaningful directions in the hyperspherical latent space. The resulting framework preserves the lightweight nature of TI while significantly improving its robustness, ensuring that personalization is both computationally efficient and semantically faithful.

Our comprehensive evaluation demonstrates that DTI consistently outperforms conventional TI and existing enhancement methods such as CrossInit~\citep{pang2024cross}, achieving substantial improvements in semantic fidelity while maintaining computational efficiency. Beyond performance gains, the directionally optimized embeddings also enable novel applications, especially smooth interpolation between personalized concepts, expanding creative possibilities in generative AI workflows.

\section{Analyzing Token Embedding Geometry}
\label{sec:obs}
This section examines the token embedding space of pre-norm Transformer architectures, such as the CLIP text encoder~\citep{radford2021learning} and Gemma~\citep{team2024gemma}, which are foundational to modern text-to-image models. 
Our analysis establishes two key findings. 
First, we demonstrate that semantic information is primarily encoded in the direction of an embedding vector. 
Second, we identify that an excessively large embedding magnitude is a common artifact of standard Textual Inversion, a phenomenon we show is detrimental to model performance. 
We substantiate these findings with empirical observations and subsequently develop a theoretical framework to elucidate the underlying cause.

\subsection{Empirical motivation: Direction encodes semantics}
\label{sec:obs-figs}

%%%%% FIGURE %%%%%%%%%%%%%%%%%%%%%%%%%%%%%%%
\begin{figure*}[t]
    \centering
    \begin{subfigure}{0.67\textwidth}
        \centering
        \includegraphics[width=\linewidth]{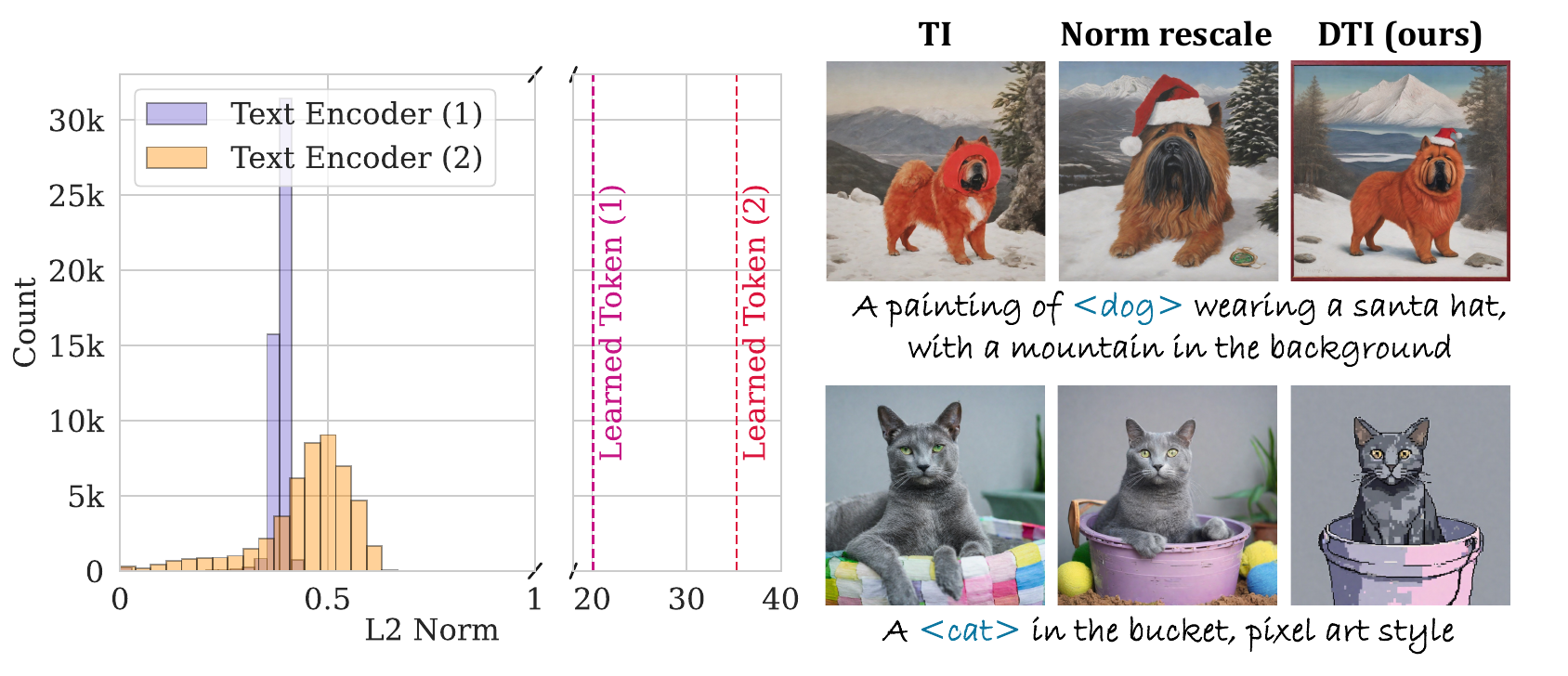}
        \caption{Norm inflation}
        \label{fig:obs-norm}
    \end{subfigure}\hfill
    \begin{subfigure}{0.31\textwidth}
        \centering
        \includegraphics[width=\linewidth]{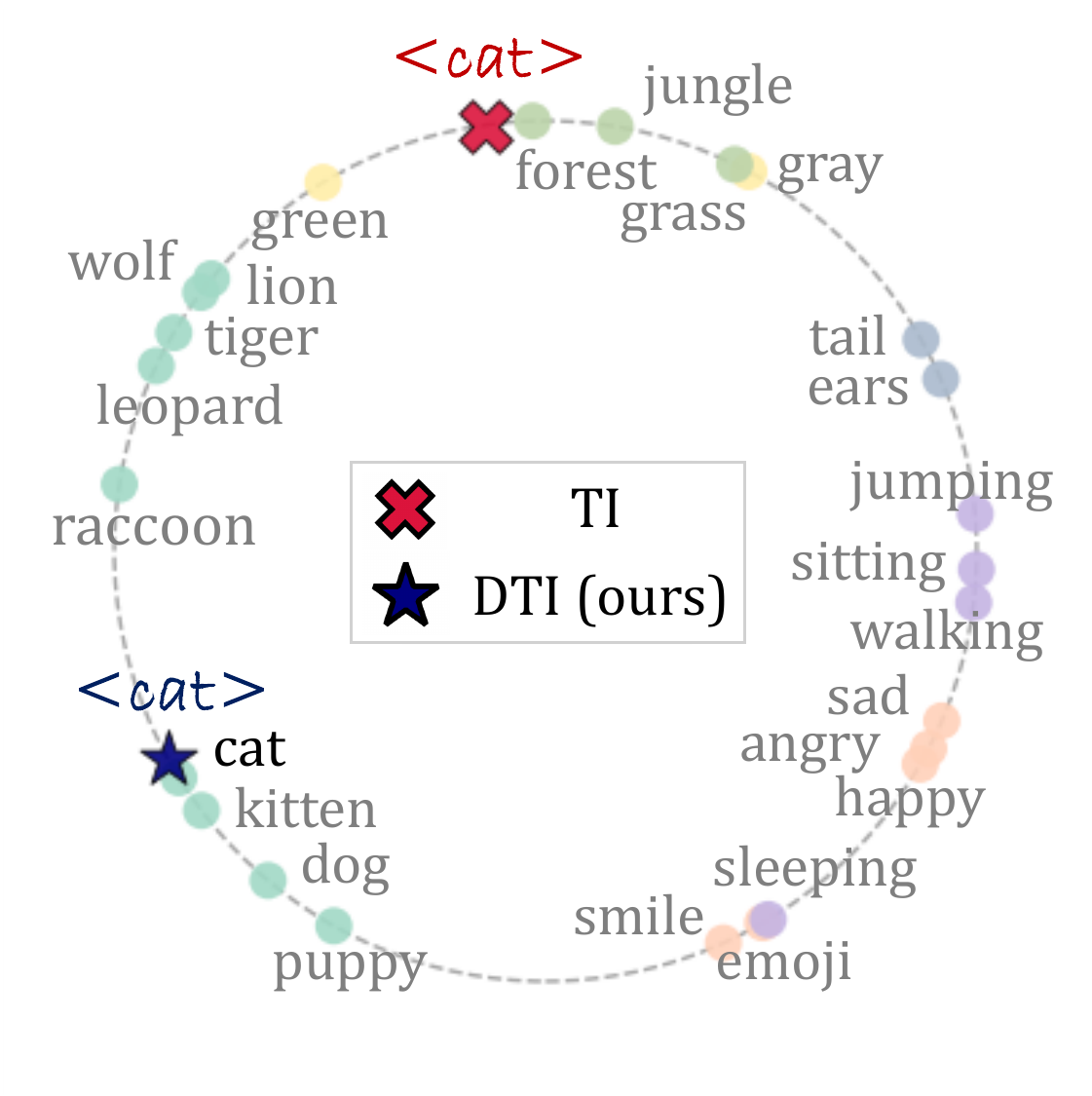}
         \caption{Semantic drift}
        \label{fig:obs-direction}
    \end{subfigure}
    \caption{Empirical motivation for our method. Our analysis reveals two critical problems in standard TI that degrade prompt fidelity. (a) TI produces embeddings with excessive norms compared to model's original vocabulary. (b) TI also suffers from semantic drift, where learned embedding direction moves away from related concepts. These observations motivate DTI, an approach designed to preserve both norm and directional integrity.}
    \label{fig:obs}
\end{figure*}
%%%%%%%%%%%%%%%%%%%%%%%%%%%%%%%%%%%%%%%%%%%%%%%%%%%%%%%%%%%%

%%%%% TABLE %%%%%%%%%%%%%%%%%%%%%%%%%%%%%%%
\begin{wraptable}{r}{0.4\textwidth}
    \vspace{-12pt}
    \captionof{table}{Top 5 nearest tokens to `apple' under different measures.}
    \vspace{-2pt}
    \centering
    \small
    \begin{tabular}{c l l}
        \toprule
        \bf Rank & \bf Euclidean & \bf Cosine \\ 
        \midrule
        1 & \texttt{U+2069} & apples \\ 
        2 & altrin & fruit \\
        3 & lestwe & peach \\
        4 & heartnews & pear \\
        5 & samanthaprabhu & egg \\
        \bottomrule
    \end{tabular}
    \vspace{-8pt}
    \label{tab:distance}
\end{wraptable}
%%%%%%%%%%%%%%%%%%%%%%%%%%%%%%%%%%%%%%%%%%%%%%%%%%
Our first observation is that the semantic structure of the textual token embedding space is predominantly directional. This aligns with the foundational principle of semantic vector spaces where meaning is encoded not in the vector's magnitude, but in its direction~\citep{mikolov2013distributed, pennington2014glove}. 
We empirically demonstrate this by comparing nearest neighbors for a given token using two different distance metrics: Euclidean distance, which is sensitive to both magnitude and direction, and cosine similarity, which is sensitive only to direction. The superior semantic coherence of neighbors found using cosine similarity validates the principle that meaning in these vector spaces is encoded primarily by direction. 

As shown in Table~\ref{tab:distance}, an embedding's nearest neighbors are semantically coherent when measured by cosine similarity but not by Euclidean distance. 
For the token `apple', its cosine-based neighbors include  `apples', `fruit', and  `pear', while its Euclidean-based neighbors are often semantically unrelated tokens that merely share a similar magnitude. This indicates that an embedding's direction is the primary carrier of semantic information.
More results are provided in Appendix~\ref{app:obs}.

Figure~\ref{fig:obs-direction} further illustrates this principle, showing that related concepts are located proximally on the unit hypersphere. 
Despite this, standard TI often neglects the importance of direction. 
This oversight leads to semantic drift, where the learned embedding for a token like \texttt{<cat>} moves directionally away from related concepts like `cat' and `kitten', as shown in the figure.
This deficiency motivates the need for a method that explicitly preserves the semantic direction of learned embeddings.

\subsection{Why large magnitudes lead to low text fidelity}
\label{sec:large_norm}
As shown in Figure~\ref{fig:obs-norm}, TI produces token embeddings with norms that are drastically larger than those of the pre-trained vocabulary (often $>20$ vs. $\approx 0.4$). These out-of-distribution (OOD) magnitudes consistently correlate with poor prompt fidelity. For instance, a prompt like ``A painting of \texttt{<dog>} wearing a santa hat'' may generate the dog but omit the hat and background details. While simply rescaling the embedding's norm after training can partially recover text alignment, it does not solve the underlying issue and can degenerate subject similarity. This raises a critical question: why do large embedding norms degrade text fidelity in pre-norm Transformers?

Our analysis reveals two primary mechanisms through which large-norm embeddings disrupt the Transformer's ability to contextualize information. We analyze a standard pre-norm Transformer block, $\vy = \vx + F_{\ell}(\Norm({\vx)})$, where $\Norm \in \{\mathrm{LayerNorm}, \mathrm{RMSNorm}\}$ and $F_{\ell}$ denotes attention/MLP sub-layers. We decompose the learned token as $\vx^{(0)} = m\,\vv + \vp$ with $m>0$ (magnitude), $\| \vv \|_2=1$ (direction), and an additive positional embedding $\vp$. Below, we explain how a large magnitude $m$ undermines the model's performance. (For formal proofs, see Appendix~\ref{app:proofs}).

\para{Effect I: Positional information is attenuated (see Lemma~\ref{lem:prenorm-scaling}).}
After LayerNorm/RMSNorm layer, the normalized signal that feeds attention/MLP becomes less sensitive to small additive terms as $m$ grows. Positional information contributes $\mathcal{O}(1/m)$ to the normalized signal $\Norm(m\vv + \vp)$.
Intuitively, a very large-norm token \emph{forgets where it is in the sequence,} weakening contextualization, resulting in omission of details such as style and background (see Figure~\ref{fig:obs}).

\para{Effect II: Residual updates stagnate (see Lemma~\ref{lem:residual}).}
The residual updates, $F_\ell(\Norm(\vx^{(\ell)}))$, are computed from \emph{normalized} inputs and thus have a bounded magnitude. When this bounded update is added through the skip connection to a large vector $\vx^{(l)}$, the \emph{relative} change (i.e., turning angle of the hidden state's direction) becomes tiny, decreasing in proportion to $1/\|\vx^{(l)}\|$.
In other words, large-norm hidden states become \emph{stuck} in their direction and are difficult for subsequent layers to refine.
This \emph{residual stagnation} accumulates across layers, severely limiting the total directional change the initial token can undergo, as formalized in the following proposition and corollary.

\begin{proposition}[Accumulated directional drift across $L$ pre-norm blocks]\label{prop:stack}
Let $\vx^{(0)}\neq \vzero$ and $\vx^{(\ell+1)} = \vx^{(\ell)} + F_\ell(\mathrm{Norm}(\vx^{(\ell)}))$ for $\ell=0,\dots,L-1$.
Let $B_\ell:=\sup_{\vu\in S}\|F_\ell(\vu)\|_2<\infty$, and $S_L:=\sum_{j=0}^{L-1} B_j$.
Assume $\|\vx^{(0)}\|_2> S_L$, then
\[
\angle\!\big(\vx^{(0)},\vx^{(L)}\big)\;\le\; \frac{\pi}{2}\sum_{\ell=0}^{L-1}\!\frac{B_\ell}{\|\vx^{(0)}\|_2-\sum_{j<\ell}B_j}
\;\le\; \frac{\pi}{2}\frac{S_L}{\|\vx^{(0)}\|_2 - S_L}.
\]
\end{proposition}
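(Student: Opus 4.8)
The plan is to reduce the global angular bound to a sum of per-layer turning angles, then control each term separately. The two ingredients are (i) a single-step angle bound, essentially the content of Lemma~\ref{lem:residual}, and (ii) the triangle inequality for the angular (great-circle) distance on rays through the origin, $\angle(\va,\vc)\le\angle(\va,\vb)+\angle(\vb,\vc)$, a standard property of the geodesic metric on normalized directions. Chaining (ii) along the trajectory $\vx^{(0)},\vx^{(1)},\dots,\vx^{(L)}$ gives
\[
\angle\!\big(\vx^{(0)},\vx^{(L)}\big)\;\le\;\sum_{\ell=0}^{L-1}\angle\!\big(\vx^{(\ell)},\vx^{(\ell+1)}\big),
\]
so it remains to bound each one-step turn.

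For the one-step bound I write $\vx^{(\ell+1)}=\vx^{(\ell)}+\vd_\ell$ with $\vd_\ell:=F_\ell(\mathrm{Norm}(\vx^{(\ell)}))$, and note $\|\vd_\ell\|_2\le B_\ell$ since $\mathrm{Norm}(\vx^{(\ell)})\in S$. Applying the law of sines to the triangle with vertices $\vzero,\vx^{(\ell)},\vx^{(\ell+1)}$ yields $\sin\angle(\vx^{(\ell)},\vx^{(\ell+1)})\le\|\vd_\ell\|_2/\|\vx^{(\ell)}\|_2\le B_\ell/\|\vx^{(\ell)}\|_2$; moreover $\langle\vx^{(\ell)},\vx^{(\ell+1)}\rangle\ge\|\vx^{(\ell)}\|_2(\|\vx^{(\ell)}\|_2-B_\ell)>0$ once $B_\ell<\|\vx^{(\ell)}\|_2$ is established, which pins the turn into $[0,\pi/2)$ and lets me invert the sine to get $\angle(\vx^{(\ell)},\vx^{(\ell+1)})\le\arcsin(B_\ell/\|\vx^{(\ell)}\|_2)$. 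Since $\arcsin$ is convex on $[0,1]$ with $\arcsin(0)=0$ and $\arcsin(1)=\pi/2$, the chord bound gives $\arcsin(t)\le\frac{\pi}{2}t$, hence $\angle(\vx^{(\ell)},\vx^{(\ell+1)})\le\frac{\pi}{2}\,B_\ell/\|\vx^{(\ell)}\|_2$.

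The final task is to replace the moving denominator $\|\vx^{(\ell)}\|_2$ by a quantity depending only on $\vx^{(0)}$. The triangle inequality applied to $\vx^{(\ell)}=\vx^{(0)}+\sum_{j<\ell}\vd_j$ gives the lower bound $\|\vx^{(\ell)}\|_2\ge\|\vx^{(0)}\|_2-\sum_{j<\ell}B_j$, which is strictly positive because $\sum_{j<\ell}B_j\le S_L<\|\vx^{(0)}\|_2$; the same estimate yields $\sum_{j\le\ell}B_j\le S_L<\|\vx^{(0)}\|_2$, i.e. $B_\ell<\|\vx^{(\ell)}\|_2$, which validates the sine inversion used above. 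Substituting this lower bound term-by-term produces the first inequality of the statement, and coarsening every denominator down to the smallest value $\|\vx^{(0)}\|_2-S_L$ while summing $\sum_\ell B_\ell=S_L$ yields the second.

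I expect the only delicate points to be the bookkeeping of strict positivity of the denominators and the orientation of each turn (ensuring $\angle<\pi/2$ so the sine is invertible); both follow cleanly from the hypothesis $\|\vx^{(0)}\|_2>S_L$ through the telescoped triangle inequality. The angular subadditivity in (i), though standard, should be stated explicitly: since the iterates are not unit-normalized, $\angle(\cdot,\cdot)$ must be read as the great-circle distance between the normalized directions $\vx^{(\ell)}/\|\vx^{(\ell)}\|_2$, for which the triangle inequality is a genuine metric property.
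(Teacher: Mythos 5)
Your proof is correct and follows essentially the same route as the paper's: per-step angle bounds of the form $\arcsin\!\bigl(B_\ell/\|\vx^{(\ell)}\|_2\bigr)\le\tfrac{\pi}{2}B_\ell/\|\vx^{(\ell)}\|_2$, chained by the angular triangle inequality, with the telescoped bound $\|\vx^{(\ell)}\|_2\ge\|\vx^{(0)}\|_2-\sum_{j<\ell}B_j$ and a final coarsening of the denominators. Your additional check that each turn lies in $[0,\pi/2)$ (via positivity of $\langle\vx^{(\ell)},\vx^{(\ell+1)}\rangle$, guaranteed by $\|\vx^{(0)}\|_2>S_L$) is a welcome bit of rigor that the paper's argument leaves implicit when inverting the sine.
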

\begin{corollary}[Scaling $\Rightarrow$ directional freezing]
\label{cor:scaling}
With the notation of Proposition~\ref{prop:stack}, for any $\alpha>1$,
\[
\angle(\alpha \vx^{(0)},\vx^{(L)}(\alpha))
\;\le\; \frac{\pi}{2}\,\frac{S_L}{\alpha\norm{\vx^{(0)}}-S_L}
\;\xrightarrow[\alpha\to\infty]{}\; 0,
\]
where $\vx^{(L)}(\alpha)$ denotes the depth-$L$ output when the initial token is $\alpha \vx^{(0)}$.
\end{corollary}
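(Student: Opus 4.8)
The plan is to prove the Proposition directly and then read off the Corollary by rescaling the initial token. The argument combines three ingredients: a per-layer angular bound (the residual-stagnation estimate of Lemma~\ref{lem:residual}), a uniform lower bound on the hidden-state norms, and the triangle inequality for the angular distance between directions.

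First I would record the single-step bound. Write $\vx^{(\ell+1)} = \vx^{(\ell)} + \vu_\ell$ with $\vu_\ell = F_\ell(\Norm(\vx^{(\ell)}))$, so that $\|\vu_\ell\|_2 \le B_\ell$ since $\Norm(\vx^{(\ell)})$ lies in $S$. Geometrically the tip of $\vx^{(\ell+1)}$ lies in the closed ball of radius $B_\ell$ about the tip of $\vx^{(\ell)}$; when $B_\ell \le \|\vx^{(\ell)}\|_2$, the angle this ball subtends at the origin is largest at the tangent configuration, giving $\sin\angle(\vx^{(\ell)},\vx^{(\ell+1)}) \le B_\ell/\|\vx^{(\ell)}\|_2$, hence $\angle(\vx^{(\ell)},\vx^{(\ell+1)}) \le \arcsin(B_\ell/\|\vx^{(\ell)}\|_2)$. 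The elementary convexity inequality $\arcsin(s) \le \frac{\pi}{2}s$ on $[0,1]$ then linearizes this to $\angle(\vx^{(\ell)},\vx^{(\ell+1)}) \le \frac{\pi}{2}\,B_\ell/\|\vx^{(\ell)}\|_2$, which is where the factor $\frac{\pi}{2}$ in the statement originates.

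Next I would control the denominators. The reverse triangle inequality gives $\|\vx^{(\ell)}\|_2 \ge \|\vx^{(0)}\|_2 - \sum_{j<\ell}B_j \ge \|\vx^{(0)}\|_2 - S_L > 0$ under the hypothesis $\|\vx^{(0)}\|_2 > S_L$; the same chain shows $B_\ell \le \sum_{j\le\ell}B_j \le S_L < \|\vx^{(0)}\|_2 \le \|\vx^{(\ell)}\|_2 + \sum_{j<\ell}B_j$, so the precondition $B_\ell \le \|\vx^{(\ell)}\|_2$ holds at every layer and the single-step bound is legitimate throughout. Substituting the norm lower bound turns the per-layer estimate into $\angle(\vx^{(\ell)},\vx^{(\ell+1)}) \le \frac{\pi}{2}\,B_\ell/(\|\vx^{(0)}\|_2 - \sum_{j<\ell}B_j)$. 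Because $\angle(\va,\vb)=\arccos(\ip{\va}{\vb}/(\|\va\|_2\|\vb\|_2))$ is the geodesic distance on the sphere of directions, it obeys the triangle inequality, so chaining along $\vx^{(0)},\dots,\vx^{(L)}$ yields $\angle(\vx^{(0)},\vx^{(L)}) \le \sum_{\ell=0}^{L-1}\angle(\vx^{(\ell)},\vx^{(\ell+1)})$; this is exactly the first inequality. The second follows by bounding every denominator below by $\|\vx^{(0)}\|_2 - S_L$ and using $\sum_\ell B_\ell = S_L$.

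For the Corollary I would apply the Proposition verbatim to the trajectory started at $\alpha\vx^{(0)}$: the constants $B_\ell$ and $S_L$ are intrinsic to the layer maps $F_\ell$ and the set $S$ and are therefore unchanged by rescaling the input, while $\|\alpha\vx^{(0)}\|_2 = \alpha\|\vx^{(0)}\|_2 > S_L$ holds for $\alpha>1$, so the bound becomes $\frac{\pi}{2}\,S_L/(\alpha\|\vx^{(0)}\|_2 - S_L) \to 0$ as $\alpha\to\infty$. The main obstacle is the single-step bound and, in particular, verifying its hypothesis $B_\ell\le\|\vx^{(\ell)}\|_2$ at every layer; once that is in hand the reverse-triangle norm estimate, the $\arcsin$ linearization, and the telescoping via the spherical triangle inequality are all routine.
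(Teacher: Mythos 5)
Your proof is correct and takes essentially the same route as the paper: the per-step bound $\angle(\vx^{(\ell)},\vx^{(\ell+1)})\le\arcsin\!\bigl(B_\ell/\|\vx^{(\ell)}\|_2\bigr)$ from Lemma~\ref{lem:residual}, the reverse-triangle lower bound $\|\vx^{(\ell)}\|_2\ge\|\vx^{(0)}\|_2-\sum_{j<\ell}B_j$, the linearization $\arcsin(s)\le\tfrac{\pi}{2}s$, summation via the spherical triangle inequality, and finally applying the proposition to the rescaled input $\alpha\vx^{(0)}$ while noting that $B_\ell$ and $S_L$ are unchanged by rescaling (scale invariance of $\normop$). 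Your explicit check that $B_\ell\le\|\vx^{(\ell)}\|_2$ holds at every layer, so that the $\arcsin$ bound is legitimate throughout, is a detail the paper leaves implicit, and is a welcome tightening rather than a deviation.
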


Together, these two effects explain why TI struggles with text fidelity. As a token's magnitude increases, its ability to integrate contextual information from the prompt diminishes.
The personalized token becomes so dominant that it overshadows other critical details, such as stylistic elements, background context, or additional subjects, from the generated output. 
To this end, this analysis highlights the need for a method that explicitly controls the magnitude of personalized tokens, which we introduce in the next section.

\subsection{Empirical Validation}
\label{sec:empirical-validation}

We empirically validate the two theoretical effects introduced in the previous sections. 
Effect~I describes the attenuation of positional information under large embedding magnitudes, 
while Effect~II concerns residual-update stagnation in pre-norm Transformer blocks. 
Our experiments directly probe both behaviors on the base encoder, TI, and our proposed DTI.

\paragraph{Effect I (Attenuation of positional information).}
\begin{wrapfigure}{r}{0.48\textwidth}
    \vspace{-10pt}
    \centering
    \includegraphics[width=\linewidth]{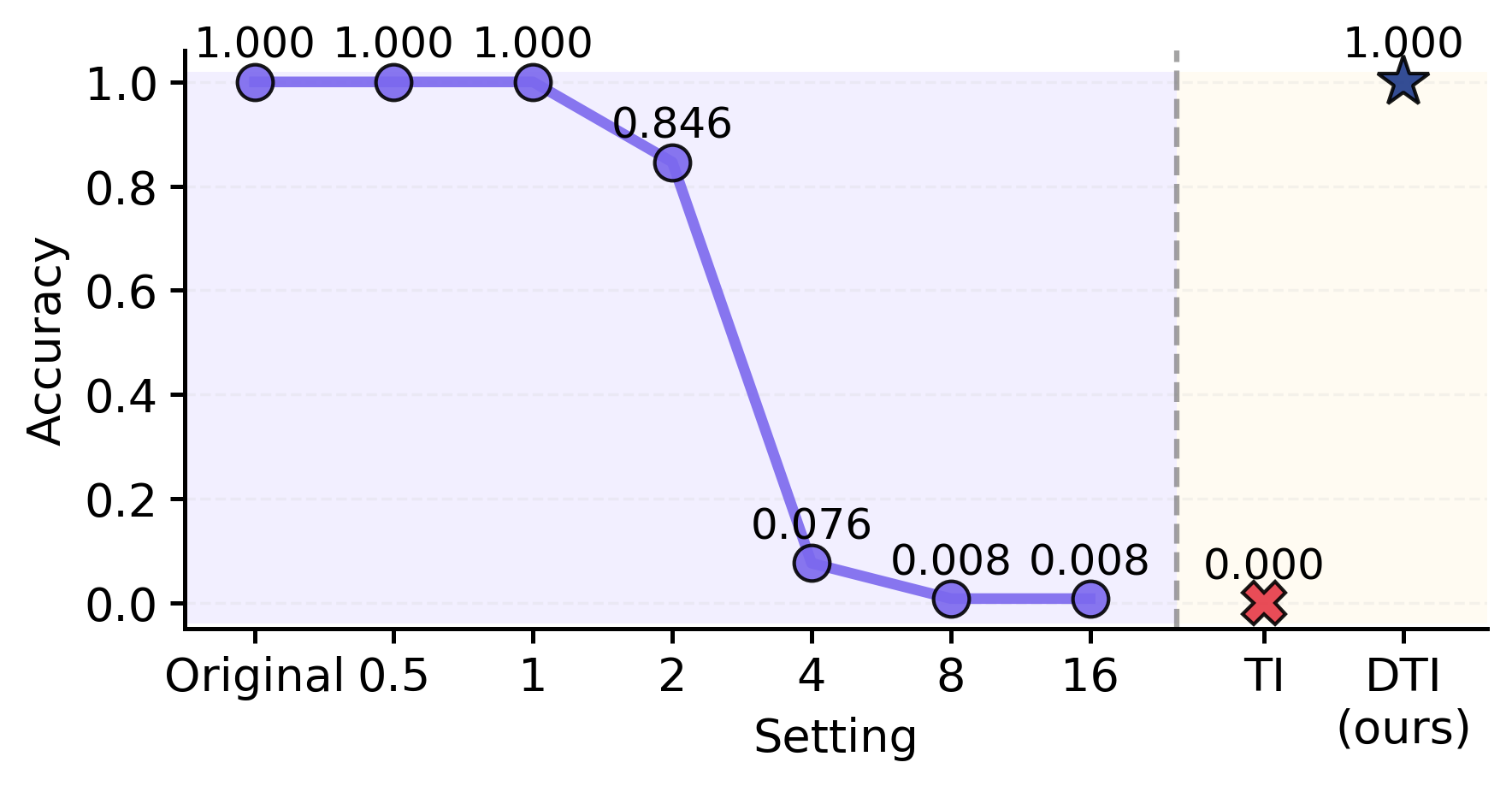}
    \vspace{-20pt}
    \caption{Position prediction accuracy from $\mathrm{LN}$ outputs under varying embedding magnitudes, compared with trained TI and DTI embeddings.}
    \label{fig:emp_val}
    \vspace{-9.5pt}
\end{wrapfigure}
We evaluate whether increasing embedding magnitude makes positional information unrecoverable after the first pre-norm normalization ($\mathrm{LN}$). 
To validate this, we train a 2-layer MLP classifier on the \emph{frozen} base text encoder to predict a token's absolute position from the output of $\mathrm{LN}(\mathbf{e} + \mathbf{p})$, where $\mathbf{e}$ and $\mathbf{p}$ each denote token and positional embeddings. 
On unmodified inputs (`Normal` in Figure~\ref{fig:emp_val}), the classifier achieves $100\%$ accuracy, confirming that $\mathrm{LN}$ preserves positional information.
We then scale the norm of a single token embedding by a factor $m \in \{0.5,1,2,4,8,16\}$ before applying $\mathrm{LN}$.
Accuracy deteriorates rapidly once $m$ exceeds the natural scale of the encoder.  
Furthermore, we evaluated the classifier on TI-trained and DTI-trained personalized embeddings.
TI embeddings, which have excessively large norms, collapse to near-zero positional accuracy, while DTI embeddings remain fully recoverable.  

This behavior directly corroborates Lemma~1: when $m$ becomes large, $\mathrm{LN}(m\mathbf{v} + \mathbf{p})$ becomes dominated by $m\mathbf{v}$, rendering the positional component $\mathbf{p}$ effectively invisible.  
DTI avoids this failure mode by constraining magnitudes to remain in-distribution.

\paragraph{Effect II (Residual-update stagnation).}
To test Lemma~2, we measure the internal angular change of hidden states within each pre-norm Transformer block.  
For each concept token, we compute the angle between the hidden state entering and exiting each block and then average across all layers.  
The average per-block angular change of TI embeddings was $21.33^\circ$,  
whereas the angular change of DTI embeddings was $33.52^\circ$ (\(\mathbf{1.57\times}\) larger).

These results support the theoretical prediction that excessively large norms suppress the effective residual direction in pre-norm blocks, causing the forward computation to behave nearly as an identity mapping.  
By keeping embedding norms within the training distribution, DTI prevents such stagnation and permits substantially larger and more meaningful updates throughout the encoder.
\section{Method: Directional Textual Inversion}
\label{sec:method}

Based on the observation and analysis in the previous section that token embeddings exhibit strong directional characteristics, we introduce \emph{Directional Textual Inversion} (DTI), a framework that optimizes an embedding's direction with in-distribution norm to enhance text fidelity in personalized text-to-image generation.

\begin{wrapfigure}{r}{0.55\textwidth}
\vspace{-21pt}
\begin{minipage}{0.55\textwidth}
\begin{algorithm}[H]
\caption{Directional Textual Inversion (DTI)}
\label{alg:dti}
\begin{algorithmic}[1]
\STATE \textbf{Inputs:} Model $\bm{\epsilon}_\theta$, text encoder $c(\cdot)$, init token $\ve_\text{init}$, magnitude $m^*$, $\kappa$, iterations $K$, learning rate $\eta$
\STATE $\vv_0 \leftarrow \ve_{\text{init}}/\|\ve_{\text{init}}\|_2$
\STATE $\bm{\mu} \leftarrow \ve_{\text{init}}/\|\ve_{\text{init}}\|_2$
\FOR{$k=0$ to $K-1$}
    \STATE Sample minibatch $(\vz, t, \bm{\epsilon})$ 
    \STATE $\vg_{\text{data}} \leftarrow \nabla_{\vv}\,\mathcal{L}_{\text{data}}(m^*\vv_k)$ 
    \STATE $\vg_{\text{euc}} \leftarrow \vg_{\text{data}} - \kappa\bm{\mu}$ \hfill (add prior gradient)
    \STATE $\vg \leftarrow \vg_{\text{euc}} - (\vg_{\text{euc}}^\mathsf{T} \vv_k)\,\vv_k$ \hfill (tangent projection)
    \STATE $\vg^\prime \leftarrow \vg / \|\vg \|_2$ \hfill (gradient scaling)
    \STATE $\vv_{k+1} \leftarrow \dfrac{\vv_k - \eta\,\vg^\prime}{\lVert \vv_k - \eta\,\vg^\prime \rVert_2}$ \hfill (retraction to $\mathcal{S}^{d-1}$)
\ENDFOR
\STATE \textbf{return} $\ve^* = m^* \vv_K$
\end{algorithmic}
\end{algorithm}
\end{minipage}
\end{wrapfigure}

\subsection{Optimizing only direction on the hypersphere}

We reformulate TI by decoupling the magnitude and direction of the learnable token embedding $\ve \in \mathbb{R}^d$. The embedding can be expressed as
\begin{equation}
    \ve \;=\; m^\star \vv, \qquad \vv\in\mathbb{S}^{d-1}.
    \label{eq:decompse}
\end{equation}
Here, $\mathbb{S}^{d-1} = \{ \vu \in R^d : \|\vu\|_2 = 1 \}$ denotes the unit sphere.
We fix the magnitude $m^\star$ and optimize only the direction ($\vv$). 
Specifically, we set $m^\star$ to be an \emph{in-distribution} magnitude derived from the frozen vocabulary of the text encoder (e.g., the average norm).
In this way, optimization focuses on semantic information in direction while avoiding out-of-distribution (OOD) norms. 

Since the parameter space is the unit sphere, Euclidean updates drift off-manifold, rendering AdamW~\citep{loshchilov_decoupled_2019}--the default optimizer for TI-like methods--unsuitable. 
To solve this, we use Riemannian stochastic gradient descent (RSGD)~\citep{bonnabel2013rsgd} with tangent-space projection and retraction:
\begin{equation}
    \vg \;=\; \vg_{\text{euc}} - (\vv_k^\mathsf{T} \vg_{\text{euc}})\,\vv_k \;\in T_{\vv_k}\mathbb{S}^{d-1},\quad
    \vv_{k+1} \;=\; \operatorname{Retr}_{\vv_k}(-\eta \vg) \;=\; \frac{\vv_k - \eta \vg}{\|\,\vv_k - \eta \vg\,\|_2}.
\end{equation}
Here, $\vg_\text{euc}$ is a Euclidean space gradient, $\vg \in T_{\vv_k}\mathcal{S}^{d-1}$ is a tangent-space gradient, and $\eta > 0$ is a learning rate. In practice, we further scaled $\vg$ by its norm. This was inspired by Euclidean space optimizers~\citep{hinton2012rmsprop, kingma_adam_2015, loshchilov_decoupled_2019}, which normalize gradients based on moving averages of squared gradients. See Algorithm~\ref{alg:dti} and Appendix~\ref{app:rsgd} for further details.

\subsection{Maximum A Posteriori formulation with a directional vMF prior}

To incorporate a directional prior, we formulate the optimization for the optimal direction $\vv^*$ as a Maximum A Posteriori (MAP) estimation problem. 
Given a dataset of images $\mathcal{D} = \{\vz_1, \dots, \vz_n \}$, the MAP estimate is found by maximizing the posterior probability:
\begin{equation}
    \vv^* = \arg\max_{\mathbf{v}} p(\mathbf{v} \mid \mathcal{D}) = \arg\max_{\mathbf{v}} \left[ \log p(\mathcal{D} \mid \mathbf{v}) + \log p(\mathbf{v}) \right].
\end{equation}
Minimizing the negative log-posterior is equivalent to minimizing a loss function composed of a data term and a prior term, $\mathcal{L}(\vv) = \mathcal{L}_{\text{data}}(m^*\vv) + \mathcal{L}_{\text{prior}}(\vv)$.

The data term, $\mathcal{L}_{\text{data}} = -\log p(\mathcal{D} \mid \mathbf{v})$, is the negative log-likelihood of the images given the direction. Following standard practice for diffusion models~\citep{Ho2020DenoisingModels}, we use the mean squared error (MSE) between the true and predicted noise as the objective:
\begin{equation}
    \mathcal{L}_{\text{data}}(m^*\vv) \coloneqq \mathbb{E}_{\vz,t,\bm{\epsilon}}[\lVert \bm{\epsilon} - \bm{\epsilon}_\theta (\vz_t, t, c(m^*\vv)) \rVert^2_2].
    \label{eq:data}
\end{equation}
Here, $\bm{\epsilon}_\theta$ and $c(\cdot)$ are the diffusion model and text encoder, respectively.
For notational simplicity, we write $c(m^*\vv)$ as shorthand for the text conditioning obtained from a sampled prompt template containing the personalized token with embedding $m^*\vv$, and omit the explicit dependence on text prompt throughout the paper.
% The Euclidean gradient of this objective, $\vg_{\text{euc}} = \nabla_{\vv}\mathcal{L}$, is used in the RSGD update.

For the prior term, $-\log p(\mathbf{v})$, we use a von Mises-Fisher (vMF) distribution on the direction $\vv$ (detailed justification in Appendix~\ref{app:vmf}). 
The vMF distribution is a probability distribution on the $(d-1)$-sphere, analogous to the Gaussian distribution in Euclidean space. It is parameterized by a mean direction $\bm{\mu}\in \mathcal{S}^{d-1}$ and a concentration parameter $\kappa \geq 0$. The probability density function is given by:
\begin{equation}
    p(\rvv|\bm{\mu}, \kappa) = \frac{\kappa^{d/2-1}}{(2\pi)^{d/2}I_{d/2-1}(\kappa)}\exp(\kappa \bm{\mu}^\mathsf{T}\vv),
\end{equation}
where $I_{d/2-1}$ is the modified Bessel function of the first kind. Here, we work with unnormalized density:
$p(\mathbf{v}) \propto \exp(\kappa \bm{\mu}^\mathsf{T} \vv)$.
Ignoring constants, the negative log-prior yields our regularization term, $\mathcal{L}_{\text{prior}}(\vv) = -\kappa \bm{\mu}^\mathsf{T} \vv$.

\para{Constant-direction prior gradient.}
A useful property is that the Euclidean gradient of the regularization term is a constant: $\nabla_\vv(-\kappa \bm{\mu}^\mathsf{T}\vv) = -\kappa \bm{\mu}$. Practically, we just add this vector to the data gradient before projecting to the tangent space and retracting. This is analogous in spirit to decoupled weight decay~\citep{loshchilov_decoupled_2019}, but adapted for the sphere with a directional prior. The update is computationally cheap (requiring no new graph operations), numerically stable, and highly interpretable: it applies a \textit{constant pull} towards a semantically meaningful direction.

\para{Selection of vMF parameters.}
The vMF prior is defined by a mean direction $\bm{\mu}$ and a concentration parameter $\kappa$. 
The mean direction $\bm{\mu}$ is set to the normalized embedding of a corresponding class token (e.g., `dog') from the pre-trained text encoder and is held constant during optimization.
Since estimating $\kappa$ is non-trivial, we treat it as a hyperparameter that controls the strength of the prior.
We performed a grid search and found that values in the range of \texttt{5e-5} to \texttt{2e-4} work well.
Based on this, we simply fixed the value of $\kappa$ to \texttt{1e-4} for all experiments.
Further discussion on the selection of prior can be found in Appendix~\ref{app:prior} and ~\ref{app:ablation}.

\section{Experiments}

\subsection{Experimental setups}
All experiments were implemented using PyTorch~\citep{paszke_pytorch_2019} and the HuggingFace \texttt{diffusers} library~\citep{von-platen-etal-2022-diffusers}, with a single NVIDIA A6000 GPU. Detailed implementation specifications are provided in Appendix~\ref{app:impl-exp}.

\para{Datasets.}
For subject personalization, we employed all reference images from the DreamBooth dataset~\citep{ruiz_dreambooth_2023}. Additional experiments on stylization and face personalization are presented in Appendix~\ref{app:applications}, utilizing StyleDrop~\citep{sohn2023styledrop} and images from FFHQ~\citep{karras2019style}. 
We evaluated all methods using 40 prompts, comprising the complete set of prompts from the DreamBooth dataset supplemented with 10 additional complex prompts.

\para{Models.}
Unless otherwise specified, we employed Stable Diffusion XL (SDXL)~\citep{podell_sdxl_2024} as our primary model due to its superior performance and widespread adoption in recent literature. To demonstrate DTI's applicability to more recent architectures, we conducted additional experiments on SANA 1.5~\citep{xie2025sana}, which employs Gemma~\citep{team2024gemma} as the text encoder and DiT~\citep{peebles2023scalable} as the image generator. 

\para{Baselines.}
Our method extends Textual Inversion (TI)~\citep{gal_image_2023}, serving as our primary baseline for direct comparison. We additionally evaluate against CrossInit~\citep{pang2024cross}, an enhanced TI variant that incorporates specialized initialization and regularization techniques. 
Comprehensive comparisons with additional baselines, including P+~\citep{voynov_p_2023}, NeTI~\citep{alaluf2023neural}, CoRe~\citep{wu2025core}, and DCO~\citep{lee_direct_2024} are provided in Appendix~\ref{app:baselines}.

\para{Metrics.}
Following established evaluation protocols~\citep{ruiz_dreambooth_2023, kumari_multi-concept_2023, gal_image_2023}, we assessed each method across two primary dimensions: subject fidelity and image-text alignment. Subject fidelity was quantified using DINOv2~\citep{oquab_dinov2_2023} feature cosine similarity. For image-text alignment, we employed SigLIP~\citep{zhai2023sigmoid}, a more recent variant of CLIP, following recent work~\citep{lee_direct_2024}. For each instance, we generated samples from 40 text prompts using 4 random seeds, yielding 160 samples per instance. Complete evaluation details are provided in Appendix~\ref{app:impl-exp}. Results were further validated through a user study conducted via Amazon Mechanical Turk.

\subsection{Main results}

\begin{table}[t]
    \centering
    \begin{minipage}{0.55\textwidth}
        \centering
        \caption{Our DTI consistently improves baselines by generating outputs with enhanced text fidelity while maintaining subject similarity.}
        \label{tab:main}
        \resizebox{\linewidth}{!}{
        \begin{tabular}{l l c c c c c}
            \toprule
            & \multicolumn{2}{c}{\textbf{SDXL}} & \multicolumn{2}{c}{\textbf{SANA 1.5-1.6B}} & \multicolumn{2}{c}{\textbf{SANA 1.5-4.8B}} \\
            \cmidrule(lr){2-3} \cmidrule(lr){4-5} \cmidrule(lr){6-7}
            \bfseries Methods & Image & Text & Image & Text & Image & Text \\
            \midrule
            TI & \textbf{0.561} & 0.292 & \textbf{0.480} & \underline{0.621} & \underline{0.446} & \underline{0.646} \\
            TI-rescaled & 0.243 & 0.466 & 0.253 & 0.655 & 0.287 & 0.548 \\
            CrossInit & \underline{0.545} & \underline{0.464} & 0.344 & 0.614 & 0.299 & 0.622 \\
            \rowcolor{gray!15} \textbf{DTI (ours)} & 0.450 & \textbf{0.522} & \underline{0.479} & \textbf{0.744} & \textbf{0.452} & \textbf{0.757} \\
            % TI & \textbf{0.561} & 0.292 & 0.478 & 0.678 & 0.414 & 0.647 \\
            % TI-rescaled & 0.243 & 0.466 & 0.246 & 0.512 & 0.260 & 0.608 \\
            % CrossInit & 0.545 & 0.464 & 0.279 & 0.656 & 0.242 & 0.589 \\
            % \rowcolor{gray!15} \textbf{DTI (ours)} & 0.450 & \textbf{0.522} & \textbf{0.519} & \textbf{0.697} & \textbf{0.512} & \textbf{0.781} \\
            \bottomrule
        \end{tabular}
        }
    \end{minipage}
    \hfill
    \begin{minipage}{0.43\textwidth}
        \caption{Ablation studies. We tested and confirmed the effectiveness of every component of our DTI.}
        \centering
    \label{tab:ablation}
        \resizebox{\linewidth}{!}{
            \begin{tabular}{l c c r r}
            \toprule
            \textbf{Optimizer} & $m^\star$ & $\kappa\times10^{-3}$ & \textbf{Image} & \textbf{Text} \\
            \midrule
            \textcolor{gray}{AdamW}  & mean                       & 0.1 & 0.335 & 0.463 \\ \hdashline
            RSGD                    & \textcolor{gray}{min}      & 0.1 & 0.030 & 0.074 \\
            % RSGD                    & \textcolor{grat}{mean}     & 0.1 & 0.- & 0.- \\
            RSGD                    & \textcolor{gray}{5.0 (OOD)}& 0.1 & 0.383 & 0.373 \\ \hdashline
            RSGD                    & mean & \textcolor{gray}{0.0}& \textbf{0.507} & 0.436 \\
            RSGD                    & mean & \textcolor{gray}{0.5}& 0.278 & \textbf{0.688} \\ \hdashline
            \rowcolor{gray!15} RSGD & mean & 0.1 &\underline{0.450} & \underline{0.522} \\
            \bottomrule
        \end{tabular}
        }
    \end{minipage}
\end{table}
%%%%%%%%%%%%%%%%%%%%%%%%%%%%%%%%%%%%%%%%%%%%%%%%%%%%%%%%%%%%%

\para{Quantitative results.}
In Table~\ref{tab:main}, we quantitatively evaluate DTI along two axes: subject similarity and text–prompt fidelity. 
DTI consistently produces outputs that adhere closely to the prompt while maintaining high subject similarity. 
To isolate the role of embedding norm analyzed in Section~\ref{sec:large_norm}, we rescaled TI’s learned embeddings to the in-distribution norm—specifically, the average norm of the vocabulary embeddings, matching the norm scale used in DTI. 
Consistent with our analysis, this simple rescaling noticeably improves text fidelity but does not fully resolve the problem, as it degrades image similarity. 
CrossInit achieves strong text fidelity on SDXL but fails to do so consistently on SANA, which we attribute to differences in their text encoders; SDXL uses a CLIP text encoder, while SANA employs the LLM-based encoder.
Notably, DTI’s advantage over the baselines becomes even more pronounced as the model size increases.
Overall, these results clearly demonstrate the advantage of DTI over competing baselines. Additional comparisons with further baselines on other Stable Diffusion variants are provided in Appendix~\ref{app:baselines}.

%%%%%% FIGURE %%%%%%%%%%%%%%%%%%%%%%%%%%%%%%
\begin{figure*}[t]
    \centering
    \includegraphics[width=0.96\linewidth]{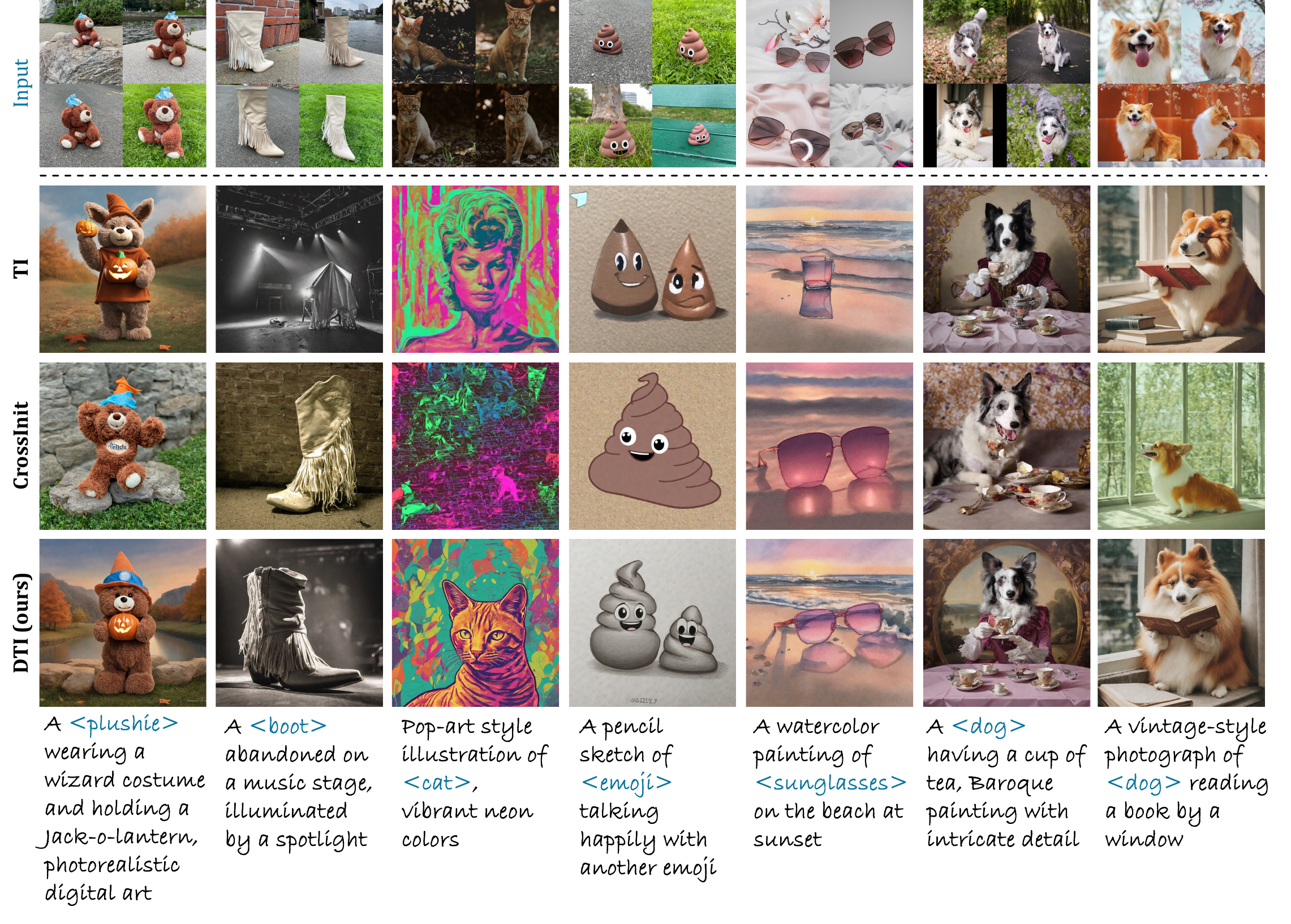}
    \caption{We compare DTI with previous methods across diverse subjects and textual prompts, spanning simple descriptions to complex variations in attributes, backgrounds, and styles (same random seeds). All results in this figure are generated with SDXL (SANA in Appendix Figure~\ref{fig:quali-app-sana}).}
    \label{fig:quali}
\end{figure*}
%%%%%%%%%%%%%%%%%%%%%%%%%%%%%%%%%%%%%%%%%%%%

\para{Qualitative results.}
Figure~\ref{fig:quali} illustrates qualitative comparisons across various prompts. 
DTI consistently generates images that more accurately reflect the content of the captions, while effectively preserving subject consistency. 
For instance, for `Pop-art style illustration of \texttt{<cat>}', TI omits the cat while DTI renders the cat in the specified style.
Similarly, in the second column, TI and CrossInit fail to incorporate all elements of the prompt, disregarding either the subject or details such as `music stage' and `spotlight'. 
In contrast, DTI integrates both the subject and these details, producing a more complete output. Collectively, these examples highlight DTI’s superior compositional fidelity and subject preservation, demonstrating that it consistently satisfies all prompt constraints.
We attribute this to DTI's stable optimization within the directional space, which facilitates improved integration of multiple prompt components. 
DTI's ability to maintain subject fidelity and adhere to textual intent establishes it as a robust choice for a wide range of text-to-image generation tasks. Additional qualitative results including those of SANA can be found in Appendix~\ref{app:quali}.

\subsection{Ablation study}
We performed an ablation study to verify the effectiveness of components of our DTI, including the optimization space, the embedding magnitude $m$, and the concentration parameter $\kappa$ of the vMF distribution. 
The results are summarized in Table~\ref{tab:ablation}.
To validate our choice of Riemannian SGD (RSGD), we compared it against a baseline 
using the AdamW optimizer.
This baseline performs standard Euclidean updates and then projects the vector back onto the unit sphere after each step, which is not a true Riemannian update.
The results show that RSGD substantially outperforms AdamW, highlighting the benefit of respecting the geometry of the directional manifold.
Next, we found that fixing the magnitude to the minimum or to an out-of-distribution scale negatively affected either subject similarity or text fidelity. Setting the magnitude to an in-distribution scale yields the best results.
Lastly, removing the prior (i.e., $\kappa = 0$) or extremely high values of $\kappa$ hurts the performance, while moderate incorporation of the prior provides the most stable results.
Overall, we confirm that these ablation results validate our design choices. Further analyses are provided in Appendix~\ref{app:ablation}. 

\begin{wraptable}{r}{0.48\textwidth}
    \vspace{-12pt}
    \centering
    \caption{We surveyed real-world user preferences regarding subject fidelity and image-text alignment. DTI ranks the top in both metrics, confirming its practical benefits.}
    \label{tab:user}
    \resizebox{\linewidth}{!}{
        \begin{tabular}{l r r r}
            \toprule
               & TI & CrossInit & \textbf{DTI (ours)} \\
            \midrule
            Image fidelity & 13.78 & 42.87 & \textbf{43.45} \\
            Text alignment & 10.83 & 22.40 & \textbf{66.77} \\
            \bottomrule
        \end{tabular}
    }
\end{wraptable}

\subsection{Human evaluation} 
To further examine the effectiveness of our method, we conducted a large-scale user study (100 participants via \textit{Amazon Mechanical Turk}) to measure real-world user preferences.
Each participant was asked to respond to 20 questions, comprising 10 questions assessing subject fidelity and 10 questions evaluating image-text alignment. 
Participants were instructed to select the output that best met the specified criteria for each question. To ensure the reliability of the study, we excluded four user responses that did not adhere to the specified instructions. 
A fixed random seed was employed, and the answer options were shuffled for each question. The results, summarized in Table~\ref{tab:user}, show that DTI consistently outperforms the other methods on both metrics, indicating that its improvements in alignment are clearly perceived by human evaluators. More details of this user study can be found in Appendix~\ref{app:user}.

%%%%% FIGURE %%%%%%%%%%%%%%%%%%%%%%%%%%%%%%%
\begin{figure*}[t]
    \centering
    \includegraphics[width=0.96\linewidth]{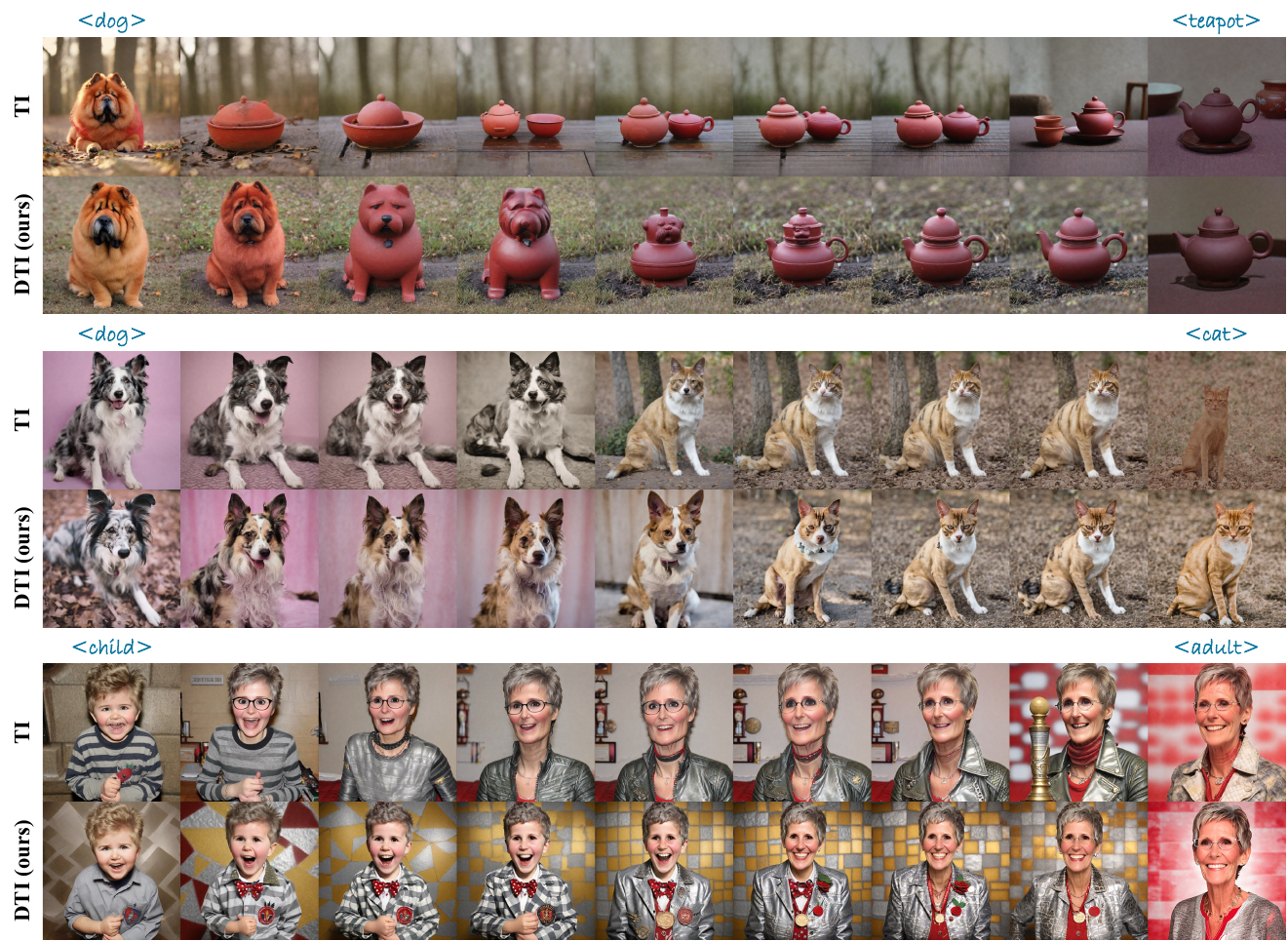}
    \caption{We compare images generated by a TI and our DTI. Two personalized subjects are interpolated, including interpolation between inanimate and animate subjects, live subjects, and human faces. Images are generated with interpolation ratio $[0.0, 0.35, 0.40, 0.45, 0.50, 0.55, 0.60, 0.65, 1.0]$ for better visualization. Our DTI offers smooth interpolation between concepts, expanding personalization along a more creative axis.}
    \label{fig:interpolation}
\end{figure*}
%%%%%%%%%%%%%%%%%%%%%%%%%%%%%%%%%%%%%%%%%%%%%%

\subsection{Embedding interpolation for creative applications}
We demonstrate the creative potential of our DTI through embedding interpolation experiments. 
As illustrated in Figure~\ref{fig:interpolation}, our DTI generates coherent interpolations via spherical linear interpolation (SLERP), which matches the unit‑sphere parameterization.
This capability is a direct result of DTI's unit-spherical embedding space, which enables smooth and effective transitions. 
In contrast, the linear interpolation used by TI often fails to produce coherent intermediate results.

The advantages of our approach are clearly visible across different domains. 
As shown in the first rows of the figure, one can seamlessly merge a dog and a teapot, resulting in imaginative hybrid objects like an adorable teapot that progressively adopts the features of the dog. This indicates that DTI excels at blending conceptually distinct subjects, a significant creative application. 
In the second example, it can create a creative animal between a dog and a cat, that merges the features of each animal in a smooth manner.
Lastly, DTI smoothly interpolates between the faces of a young boy and an older woman, generating a plausible progression that simultaneously alters age and appearance while maintaining facial coherence. This highlights its potential for nuanced face personalization. 

Throughout these transitions, DTI produces visually consistent and creative outputs that retain semantic meaning, unlocking novel user-driven applications and establishing it as a powerful tool for intuitive concept blending.
We provide the results of other applications, including face personalization, stylization and subject-style generation throughout Appendix~\ref{app:applications}.

\section{Related Work}

\subsection{Personalized text-to-image generation}
Recent advancements in text-to-image (T2I) generation have considerably expanded the creative capabilities and flexibility of generative models~\citep{ramesh_zero-shot_2021, rombach_high-resolution_2022, nichol_glide_2022, ramesh_hierarchical_2022, yu_scaling_2022, podell_sdxl_2024}. 
Despite these innovations, natural language inherently struggles to precisely convey nuanced, user-specific concepts. 
This inherent limitation has driven the development of personalization methods, which allow users to generate images reflecting unique concepts with creative prompts.

Textual Inversion~\citep{gal_image_2023}, which is most well-known for its lightweight integration into many other personalization works, uses embedding optimization by introducing learnable tokens for personalized information without model modification. 
Subsequent work explored diverse embedding strategies~\citep{voynov_p_2023, alaluf2023neural, wu2025core, zhang2024compositional}, often at excessive computational cost.
Among them, CrossInit~\citep{pang2024cross} offered an efficient initialization strategy with minimal overhead, replacing initialization tokens with the output of text encoder and using a regularization loss.

In contrast, fine-tuning based methods such as DreamBooth~\citep{ruiz_dreambooth_2023} achieve high subject fidelity, but require significant computational resources compared to embedding optimization methods~\citep{kumari_multi-concept_2023, han2023svdiff, gu2023mix, chen2023disenbooth, tewel2023key, zhang2024attention, qiu2023controlling, pang2024attndreambooth} . 
More recently, \cite{park2024textboost} proposed fine-tuning the text encoder instead of the image generator for efficiency, but they still demand more parameters compared to embedding optimization methods. 

Meanwhile, there exists a line of encoder-based approaches~\citep{wei2023elite, ruiz2024hyperdreambooth, ye2023ip, gal2023encoder, chen2023subject, li2023blip, pang2024attndreambooth, ma2024subject} that offer fast inference, but they necessitate substantial pre-training.

\subsection{Directional embedding space}
A number of prior works have emphasized constraining embedding representations to the hypersphere. These include using vMF mixtures for directional clustering~\citep{jameel2019word}, normalizing norms for face recognition~\citep{meng2019spherical}, angle-optimized embeddings to address cosine saturation~\citep{li2024aoe}, and spherical constraints for uniform document clustering~\citep{zhang2020deep}. \citet{wang2020understanding} offered theoretical support for hyperspherical constraints in contrastive learning. Our method aligns with this trend by modeling embeddings as directional distributions but uniquely decomposes and explicitly optimizes textual embedding direction using a vMF prior within the Textual Inversion framework.

\section{Discussion \& Conclusion}
Our DTI primarily improves text prompt fidelity as it does not directly optimize for subject similarity. For applications where high subject fidelity is paramount, DTI can be used in conjunction with complementary lightweight fine-tuning methods, such as LoRA, as we demonstrate qualitatively in Figure~\ref{fig:quali-lora}. Furthermore, our analysis is centered on the geometry of modern pre-norm text encoders. An interesting direction for future work would be to investigate whether our findings generalize to other types of encoders with different normalization or positional encoding schemes.

Overall, our work tackles a key challenge in personalized text-to-image generation: achieving a strong alignment between text prompts and generated imagery. 
We have identified and rigorously analyzed embedding norm inflation as a significant bottleneck to this alignment, providing both theoretical and empirical evidence of its detrimental effects. 
In addition, our investigation focuses on the directional characteristics of the token embedding space, an area that has been comparatively underexplored in the literature, particularly when contrasted with the extensive research dedicated to the output embedding space of text encoders.
Leveraging this key insight into the semantic significance of token embedding directionality, we proposed Directional Textual Inversion (DTI), a novel framework that keeps the embedding norm at an in-distribution scale and solely optimizes the direction. We further reformulate the conventional Textual Inversion optimization process by incorporating directional priors. 
Our DTI demonstrably enhances prompt fidelity, thereby substantially improving the practicality of token embedding-based personalization and enabling innovative creative applications such as the smooth interpolation of learned concepts. 
We truly hope our work paves the way for more effective and versatile token embedding-based personalization within generative AI, unlocking enhanced capabilities for users to articulate their unique creative visions with greater precision and control.

% \subsubsection*{Author Contributions}
% If you'd like to, you may include  a section for author contributions as is done
% in many journals. This is optional and at the discretion of the authors.

\subsubsection*{Acknowledgments}
This research was supported by the Basic Science Research Program through the National Research Foundation of Korea (NRF) funded by the MSIP (No. RS-2025-00520207); the Institute of Information \& Communications Technology Planning \& Evaluation (IITP) grants funded by the Korea government (MSIT) (Nos. 2022-0-00680, 2022-0-01045, RS-2024-00457882, RS-2025-02217259, RS-2019-II190075), including the National AI Research Lab Project and the Artificial Intelligence Graduate School Support Program (KAIST); and the Korea Evaluation Institute of Industrial Technology (KEIT) grants funded by the Korea government (MOTIE) (Nos. 2022-0-00680, 2022-0-01045, RS-2025-02217259).

\subsection*{Reproducibility Statement}
To ensure the full reproducibility of our research, we provide our complete source code, experimental details, and dataset information. We utilized publicly available datasets, primarily from DreamBooth~\citep{ruiz_dreambooth_2023}, FFHQ~\citep{karras2019style}, and StyleDrop~\citep{sohn2023styledrop}, and our repository includes scripts for all necessary preprocessing. Additionally, all software dependencies are explicitly specified in the \texttt{pyproject.toml} file. All experiments were conducted on a single NVIDIA A6000 GPU, with training taking approximately 7 minutes per subject for SDXL-base and 30 minutes for SANA 1.5-1.6B. To guarantee transparency and ease of use, all hyperparameters are detailed in the Appendix and are also included in the run scripts within our codebase.

\subsection*{LLM Usage Statement}
We utilized Large Language Models (LLMs) to improve the grammar and clarity of this manuscript. The core research, including the analysis and method, is the exclusive work of the authors.

\bibliography{main}

@inproceedings{alaluf2024myvlm,
  title={Myvlm: Personalizing vlms for user-specific queries},
  author={Alaluf, Yuval and Richardson, Elad and Tulyakov, Sergey and Aberman, Kfir and Cohen-Or, Daniel},
  booktitle={European Conference on Computer Vision},
  pages={73--91},
  year={2024},
  organization={Springer}
}

@article{lester2021power,
  title={The power of scale for parameter-efficient prompt tuning},
  author={Lester, Brian and Al-Rfou, Rami and Constant, Noah},
  journal={arXiv preprint arXiv:2104.08691},
  year={2021}
}

@inproceedings{kingma_adam_2015,
  title = {Adam: {A} {Method} for {Stochastic} {Optimization}},
  booktitle = {International Conference on Learning Representations},
  author = {Kingma, Diederik P. and Ba, Jimmy},
  year = {2015},
}

@inproceedings{cho_riemannian_2017,
  title = {Riemannian approach to batch normalization},
  booktitle = {Advances in Neural Information Processing Systems},
  author = {Cho, Minhyung and Lee, Jaehyung},
  year = {2017},
}

@inproceedings{sohn2023styledrop,
  title = {{StyleDrop}: {Text}-to-{Image} {Generation} in {Any} {Style}},
  shorttitle = {{StyleDrop}},
  booktitle = {Advances in Neural Information Processing Systems},
  author = {Sohn, Kihyuk and Ruiz, Nataniel and Lee, Kimin and Chin, Daniel Castro and Blok, Irina and Chang, Huiwen and Barber, Jarred and Jiang, Lu and Entis, Glenn and Li, Yuanzhen and Hao, Yuan and Essa, Irfan and Rubinstein, Michael and Krishnan, Dilip},
  year = {2023},
}

@inproceedings{paszke_pytorch_2019,
  title = {{PyTorch}: {An} {Imperative} {Style}, {High}-{Performance} {Deep} {Learning} {Library}},
  booktitle = {Advances in Neural Information Processing Systems},
  author = {Paszke, Adam and Gross, Sam and Massa, Francisco and Lerer, Adam and Bradbury, James and Chanan, Gregory and Killeen, Trevor and Lin, Zeming and Gimelshein, Natalia and Antiga, Luca and Desmaison, Alban and Kopf, Andreas and Yang, Edward and DeVito, Zachary and Raison, Martin and Tejani, Alykhan and Chilamkurthy, Sasank and Steiner, Benoit and Fang, Lu and Bai, Junjie and Chintala, Soumith},
  year = {2019},
}

@article{yu_scaling_2022,
  title = {Scaling {Autoregressive} {Models} for {Content}-{Rich} {Text}-to-{Image} {Generation}},
  issn = {2835-8856},
  journal = {Transactions on Machine Learning Research},
  author = {Yu, Jiahui and Xu, Yuanzhong and Koh, Jing Yu and Luong, Thang and Baid, Gunjan and Wang, Zirui and Vasudevan, Vijay and Ku, Alexander and Yang, Yinfei and Ayan, Burcu Karagol and Hutchinson, Ben and Han, Wei and Parekh, Zarana and Li, Xin and Zhang, Han and Baldridge, Jason and Wu, Yonghui},
  year = {2022},
}

@article{ramesh_hierarchical_2022,
  title={Hierarchical text-conditional image generation with clip latents},
  author={Ramesh, Aditya and Dhariwal, Prafulla and Nichol, Alex and Chu, Casey and Chen, Mark},
  journal={arXiv preprint arXiv:2204.06125},
  volume={1},
  number={2},
  pages={3},
  year={2022}
}

@inproceedings{ramesh_zero-shot_2021,
  title = {Zero-{Shot} {Text}-to-{Image} {Generation}},
  author = {Ramesh, Aditya and Pavlov, Mikhail and Goh, Gabriel and Gray, Scott and Voss, Chelsea and Radford, Alec and Chen, Mark and Sutskever, Ilya},
  booktitle = {International Conference on Machine Learning},
  year = {2021},
}

@inproceedings{nichol_glide_2022,
  title = {{GLIDE}: {Towards} {Photorealistic} {Image} {Generation} and {Editing} with {Text}-{Guided} {Diffusion} {Models}},
  author = {Nichol, Alexander Quinn and Dhariwal, Prafulla and Ramesh, Aditya and Shyam, Pranav and Mishkin, Pamela and Mcgrew, Bob and Sutskever, Ilya and Chen, Mark},
  booktitle = {International Conference on Machine Learning},
  year = {2022},
}

@article{oquab_dinov2_2023,
  title = {{DINOv2}: {Learning} {Robust} {Visual} {Features} without {Supervision}},
  issn = {2835-8856},
  author = {Oquab, Maxime and Darcet, Timothée and Moutakanni, Théo and Vo, Huy V. and Szafraniec, Marc and Khalidov, Vasil and Fernandez, Pierre and Haziza, Daniel and Massa, Francisco and El-Nouby, Alaaeldin and Assran, Mido and Ballas, Nicolas and Galuba, Wojciech and Howes, Russell and Huang, Po-Yao and Li, Shang-Wen and Misra, Ishan and Rabbat, Michael and Sharma, Vasu and Synnaeve, Gabriel and Xu, Hu and Jegou, Herve and Mairal, Julien and Labatut, Patrick and Joulin, Armand and Bojanowski, Piotr},
  journal = {Transactions on Machine Learning Research},
  month = jul,
  year = {2023},
}

@inproceedings{ruiz_dreambooth_2023,
  title = {{DreamBooth}: {Fine} {Tuning} {Text}-to-{Image} {Diffusion} {Models} for {Subject}-{Driven} {Generation}},
  booktitle = {IEEE/CVF Conference on Computer Vision and Pattern Recognition},
  author = {Ruiz, Nataniel and Li, Yuanzhen and Jampani, Varun and Pritch, Yael and Rubinstein, Michael and Aberman, Kfir},
  year = {2023},
}

@inproceedings{podell_sdxl_2024,
  title = {{SDXL}: {Improving} {Latent} {Diffusion} {Models} for {High}-{Resolution} {Image} {Synthesis}},
  author = {Podell, Dustin and English, Zion and Lacey, Kyle and Blattmann, Andreas and Dockhorn, Tim and Müller, Jonas and Penna, Joe and Rombach, Robin},
  booktitle = {International Conference on Learning Representations},
  year = {2024},
}

@inproceedings{rombach_high-resolution_2022,
  title = {High-{Resolution} {Image} {Synthesis} with {Latent} {Diffusion} {Models}},
  booktitle = {IEEE/CVF Conference on Computer Vision and Pattern Recognition},
  author = {Rombach, Robin and Blattmann, Andreas and Lorenz, Dominik and Esser, Patrick and Ommer, Björn},
  year = {2022},
}

@inproceedings{lee_direct_2024,
  title = {Direct {Consistency} {Optimization} for {Compositional} {Text}-to-{Image} {Personalization}},
  booktitle = {Advances in Neural Information Processing Systems},
  author = {Lee, Kyungmin and Kwak, Sangkyung and Sohn, Kihyuk and Shin, Jinwoo},
  year = {2024},
}

@article{hao_vico_2023,
  title = {{ViCo}: Plug-and-Play Visual Condition for Personalized Text-to-Image Generation},
  author = {Hao, Shaozhe and Han, Kai and Zhao, Shihao and Wong, Kwan-Yee K.},
  journal = {arXiv preprint arXiv:2306.00971},
  year = {2023}
}

@article{voynov_p_2023,
  title={p+: Extended textual conditioning in text-to-image generation},
  author={Voynov, Andrey and Chu, Qinghao and Cohen-Or, Daniel and Aberman, Kfir},
  journal={arXiv preprint arXiv:2303.09522},
  year={2023}
}

@article{bonnabel2013rsgd,
  title = {Stochastic gradient descent on {Riemannian} manifolds},
  volume = {58},
  issn = {0018-9286, 1558-2523},
  number = {9},
  journal = {IEEE Transactions on Automatic Control},
  author = {Bonnabel, Silvere},
  month = sep,
  year = {2013},
  pages = {2217--2229},
}

@inproceedings{gal_image_2023,
  title = {An {Image} is {Worth} {One} {Word}: {Personalizing} {Text}-to-{Image} {Generation} using {Textual} {Inversion}},
  author = {Gal, Rinon and Alaluf, Yuval and Atzmon, Yuval and Patashnik, Or and Bermano, Amit H. and Chechik, Gal and Cohen-Or, Daniel},
  booktitle = {International Conference on Learning Representations},
  year = {2023},
}

@inproceedings{kumari_multi-concept_2023,
  title = {Multi-{Concept} {Customization} of {Text}-to-{Image} {Diffusion}},
  booktitle = {IEEE/CVF Conference on Computer Vision and Pattern Recognition},
  author = {Kumari, Nupur and Zhang, Bingliang and Zhang, Richard and Shechtman, Eli and Zhu, Jun-Yan},
  year = {2023},
}

@article{alaluf2023neural,
  title={A neural space-time representation for text-to-image personalization},
  author={Alaluf, Yuval and Richardson, Elad and Metzer, Gal and Cohen-Or, Daniel},
  journal={ACM Transactions on Graphics},
  volume={42},
  number={6},
  pages={1--10},
  year={2023},
  publisher={ACM New York, NY, USA}
}

@article{ba2016layer,
  title={Layer Normalization},
  author={Ba, Jimmy Lei and Kiros, Jamie Ryan and Hinton, Geoffrey E},
  journal={arXiv preprint arXiv:1607.06450},
  year={2016}
}

@inproceedings{li2023blip,
    author = {Li, Dongxu and Li, Junnan and Hoi, Steven},
    title = {BLIP-Diffusion: Pre-trained subject representation for controllable text-to-image generation and editing},
    booktitle = {Advances in Neural Information Processing Systems},
    year = {2023}
}

@inproceedings{chen2023disenbooth,
  title={Disenbooth: Identity-preserving disentangled tuning for subject-driven text-to-image generation},
  author={Chen, Hong and Zhang, Yipeng and Wu, Simin and Wang, Xin and Duan, Xuguang and Zhou, Yuwei and Zhu, Wenwu},
  booktitle = {International Conference on Learning Representations},
  year={2023}
}

@inproceedings{chen2023subject,
  title={Subject-driven text-to-image generation via apprenticeship learning},
  author={Chen, Wenhu and Hu, Hexiang and Li, Yandong and Ruiz, Nataniel and Jia, Xuhui and Chang, Ming-Wei and Cohen, William W},
  booktitle={Advances in Neural Information Processing Systems},
  year={2023}
}

@article{duchi2011adaptive,
  title={Adaptive subgradient methods for online learning and stochastic optimization.},
  author={Duchi, John and Hazan, Elad and Singer, Yoram},
  journal={Journal of Machine Learning Research},
  volume={12},
  number={7},
  year={2011}
}

@inproceedings{han2023svdiff,
  title={SVDiff: Compact parameter space for diffusion fine-tuning},
  author={Han, Ligong and Li, Yinxiao and Zhang, Han and Milanfar, Peyman and Metaxas, Dimitris and Yang, Feng},
  booktitle={IEEE/CVF Conference on Computer Vision and Pattern Recognition},
  year={2023}
}

@inproceedings{jameel2019word,
  title={Word and document embedding with vMF-mixture priors on context word vectors},
  author={Jameel, Shoaib and Schockaert, Steven},
  booktitle={Annual Meeting of the Association for Computational Linguistics},
  year={2019},
}

@article{gal2023encoder,
  title={Encoder-based domain tuning for fast personalization of text-to-image models},
  author={Gal, Rinon and Arar, Moab and Atzmon, Yuval and Bermano, Amit H and Chechik, Gal and Cohen-Or, Daniel},
  journal={ACM Transactions on Graphics},
  volume={42},
  number={4},
  pages={1--13},
  year={2023},
  publisher={ACM New York, NY, USA}
}

@inproceedings{gu2023mix,
  title={Mix-of-show: Decentralized low-rank adaptation for multi-concept customization of diffusion models},
  author={Gu, Yuchao and Wang, Xintao and Wu, Jay Zhangjie and Shi, Yujun and Chen, Yunpeng and Fan, Zihan and Xiao, Wuyou and Zhao, Rui and Chang, Shuning and Wu, Weijia and others},
  booktitle={Advances in Neural Information Processing Systems},
  year={2023}
}

@inproceedings{karras2019style,
  title={A style-based generator architecture for generative adversarial networks},
  author={Karras, Tero and Laine, Samuli and Aila, Timo},
  booktitle={IEEE/CVF Conference on Computer Vision and Pattern Recognition},
  pages={4401--4410},
  year={2019}
}

@inproceedings{li2024aoe,
  title={AoE: Angle-optimized embeddings for semantic textual similarity},
  author={Li, Xianming and Li, Jing},
  booktitle={Annual Meeting of the Association for Computational Linguistics},
  year={2024}
}

@inproceedings{ma2024subject,
  title={Subject-diffusion: Open domain personalized text-to-image generation without test-time fine-tuning},
  author={Ma, Jian and Liang, Junhao and Chen, Chen and Lu, Haonan},
  booktitle={ACM SIGGRAPH Conference Proceedings},
  year={2024}
}

@inproceedings{meng2019spherical,
  title={Spherical text embedding},
  author={Meng, Yu and Huang, Jiaxin and Wang, Guangyuan and Zhang, Chao and Zhuang, Honglei and Kaplan, Lance and Han, Jiawei},
  booktitle={Advances in Neural Information Processing Systems},
  year={2019}
}

@inproceedings{pang2024cross,
  title={Cross Initialization for Face Personalization of Text-to-Image Models},
  author={Pang, Lianyu and Yin, Jian and Xie, Haoran and Wang, Qiping and Li, Qing and Mao, Xudong},
  booktitle={IEEE/CVF Conference on Computer Vision and Pattern Recognition},
  year={2024}
}

@inproceedings{pang2024attndreambooth,
  title={AttnDreamBooth: Towards text-aligned personalized text-to-image generation},
  author={Pang, Lianyu and Yin, Jian and Zhao, Baoquan and Wu, Feize and Wang, Fu Lee and Li, Qing and Mao, Xudong},
  booktitle={Advances in Neural Information Processing Systems},
  year={2024}
}

@inproceedings{ruiz2024hyperdreambooth,
  title={Hyperdreambooth: Hypernetworks for fast personalization of text-to-image models},
  author={Ruiz, Nataniel and Li, Yuanzhen and Jampani, Varun and Wei, Wei and Hou, Tingbo and Pritch, Yael and Wadhwa, Neal and Rubinstein, Michael and Aberman, Kfir},
  booktitle={IEEE/CVF Conference on Computer Vision and Pattern Recognition},
  year={2024}
}

@inproceedings{qiu2023controlling,
  title={Controlling text-to-image diffusion by orthogonal finetuning},
  author={Qiu, Zeju and Liu, Weiyang and Feng, Haiwen and Xue, Yuxuan and Feng, Yao and Liu, Zhen and Zhang, Dan and Weller, Adrian and Sch{\"o}lkopf, Bernhard},
  booktitle={Advances in Neural Information Processing Systems},
  year={2023}
}

@inproceedings{tewel2023key,
  title={Key-locked rank one editing for text-to-image personalization},
  author={Tewel, Yoad and Gal, Rinon and Chechik, Gal and Atzmon, Yuval},
  booktitle={ACM SIGGRAPH Conference Proceedings},
  year={2023}
}

@misc{von-platen-etal-2022-diffusers,
  author = {Patrick von Platen and Suraj Patil and Anton Lozhkov and Pedro Cuenca and Nathan Lambert and Kashif Rasul and Mishig Davaadorj and Dhruv Nair and Sayak Paul and William Berman and Yiyi Xu and Steven Liu and Thomas Wolf},
  title = {Diffusers: State-of-the-art diffusion models},
  year = {2022},
  publisher = {GitHub},
  journal = {GitHub repository},
  howpublished = {\url{https://github.com/huggingface/diffusers}}
}

@inproceedings{wang2020understanding,
  title={Understanding contrastive representation learning through alignment and uniformity on the hypersphere},
  author={Wang, Tongzhou and Isola, Phillip},
  booktitle={International Conference on Machine Learning},
  year={2020},
}

@inproceedings{wei2023elite,
  title={Elite: Encoding visual concepts into textual embeddings for customized text-to-image generation},
  author={Wei, Yuxiang and Zhang, Yabo and Ji, Zhilong and Bai, Jinfeng and Zhang, Lei and Zuo, Wangmeng},
  booktitle={IEEE/CVF International Conference on Computer Vision},
  year={2023}
}

@inproceedings{wu2025core,
  title={CoRe: Context-regularized text embedding learning for text-to-image personalization},
  author={Wu, Feize and Pang, Yun and Zhang, Junyi and Pang, Lianyu and Yin, Jian and Zhao, Baoquan and Li, Qing and Mao, Xudong},
  booktitle={AAAI Conference on Artificial Intelligence},
  year={2025}
}

@article{ye2023ip,
  title={IP-Adapter: Text Compatible Image Prompt Adapter for Text-to-Image Diffusion Models},
  author={Ye, Hu and Zhang, Jun and Liu, Sibo and Han, Xiao and Yang, Wei},
  journal={arXiv preprint arXiv:2308.06721},
  year={2023}
}

@inproceedings{zhang2020deep,
  title={Deep metric learning with spherical embedding},
  author={Zhang, Dingyi and Li, Yingming and Zhang, Zhongfei},
  booktitle={Advances in Neural Information Processing Systems},
  year={2020}
}

@inproceedings{zhang2024attention,
  title={Attention calibration for disentangled text-to-image personalization},
  author={Zhang, Yanbing and Yang, Mengping and Zhou, Qin and Wang, Zhe},
  booktitle={IEEE/CVF Conference on Computer Vision and Pattern Recognition},
  year={2024}
}

@inproceedings{zhang2024compositional,
  title={Compositional inversion for stable diffusion models},
  author={Zhang, Xulu and Wei, Xiao-Yong and Wu, Jinlin and Zhang, Tianyi and Zhang, Zhaoxiang and Lei, Zhen and Li, Qing},
  booktitle={AAAI Conference on Artificial Intelligence},
  year={2024}
}

@inproceedings{Ho2020DenoisingModels,
  title = {{Denoising Diffusion Probabilistic Models}},
  year = {2020},
  author = {Ho, Jonathan and Jain, Ajay and Abbeel, Pieter},
  booktitle = {Advances in Neural Information Processing Systems},
}

@inproceedings{zhang_root_2019,
    title = {Root {Mean} {Square} {Layer} {Normalization}},
    booktitle = {Advances in {Neural} {Information} {Processing} {Systems}},
    author = {Zhang, Biao and Sennrich, Rico},
    year = {2019},
}

@inproceedings{mikolov2013distributed,
  title={Distributed representations of words and phrases and their compositionality},
  author={Mikolov, Tomas and Sutskever, Ilya and Chen, Kai and Corrado, Greg S and Dean, Jeff},
  booktitle={Advances in Neural Information Processing Systems},
  year={2013}
}

@inproceedings{pennington2014glove,
  title={Glove: Global vectors for word representation},
  author={Pennington, Jeffrey and Socher, Richard and Manning, Christopher D},
  booktitle={Conference on Empirical Methods in Natural Language Processing},
  year={2014}
}

@article{bai2025qwen2,
  title={Qwen2. 5-vl technical report},
  author={Bai, Shuai and Chen, Keqin and Liu, Xuejing and Wang, Jialin and Ge, Wenbin and Song, Sibo and Dang, Kai and Wang, Peng and Wang, Shijie and Tang, Jun and others},
  journal={arXiv preprint arXiv:2502.13923},
  year={2025}
}

@article{park2024textboost,
  title={{TextBoost}: Towards {One-Shot} {Personalization} of {Text-to-Image} {Models} via {Fine-tuning} {Text} {Encoder}},
  author={Park, NaHyeon and Kim, Kunhee and Shim, Hyunjung},
  journal={arXiv preprint arXiv:2409.08248},
  year={2024}
}

@inproceedings{xie2025sana,
  title={SANA: Efficient high-resolution image synthesis with linear diffusion transformers},
  author={Xie, Enze and Chen, Junsong and Chen, Junyu and Cai, Han and Tang, Haotian and Lin, Yujun and Zhang, Zhekai and Li, Muyang and Zhu, Ligeng and Lu, Yao and others},
  booktitle={International Conference on Learning Representations},
  year={2025}
}

@article{team2024gemma,
  title={Gemma: Open models based on gemini research and technology},
  author={Team, Gemma and Mesnard, Thomas and Hardin, Cassidy and Dadashi, Robert and Bhupatiraju, Surya and Pathak, Shreya and Sifre, Laurent and Rivi{\`e}re, Morgane and Kale, Mihir Sanjay and Love, Juliette and others},
  journal={arXiv preprint arXiv:2403.08295},
  year={2024}
}

@inproceedings{peebles2023scalable,
  title={Scalable Diffusion Models with Transformers},
  author={Peebles, William and Xie, Saining},
  booktitle={IEEE/CVF International Conference on Computer Vision},
  year={2023}
}

@inproceedings{zhai2023sigmoid,
  title={Sigmoid loss for language image pre-training},
  author={Zhai, Xiaohua and Mustafa, Basil and Kolesnikov, Alexander and Beyer, Lucas},
  booktitle={IEEE/CVF International Conference on Computer Vision},
  year={2023}
}

@inproceedings{loshchilov_decoupled_2019,
    title = {Decoupled {Weight} {Decay} {Regularization}},
    booktitle = {International Conference on Learning Representations},
    author = {Loshchilov, Ilya and Hutter, Frank},
    year = {2019},
}

@misc{hinton2012rmsprop,
  title={Lecture 6e: RmsProp},
  author={Hinton, Geoffrey E and Srivastava, Nitish and Swersky, Kevin},
  howpublished={Coursera, Nonlinear aural component analysis of the cochlea},
  year={2012},
  url={http://www.cs.toronto.edu/~tijmen/csc321/slides/lecture_slides_lec6.pdf}
}

@inproceedings{radford2021learning,
  title={Learning transferable visual models from natural language supervision},
  author={Radford, Alec and Kim, Jong Wook and Hallacy, Chris and Ramesh, Aditya and Goh, Gabriel and Agarwal, Sandhini and Sastry, Girish and Askell, Amanda and Mishkin, Pamela and Clark, Jack and others},
  booktitle={International Conference on Machine Learning},
  year={2021},
}
\bibliographystyle{iclr2026_conference}

\appendix
\newpage

\setcounter{lemma}{0}
\setcounter{proposition}{0}

%%%%%%%%%%%%%%%%%%%%%%%%%%%%%%%%%%%%%%%%%%%%%%%%%%%%%%%%%%%%
\section{Embedding norm and direction}
\label{app:obs}
%%%%%%%%%%%%%%%%%%%%%%%%%%%%%%%%%%%%%%%%%%%%%%%%%%%%%%%%%%%%

%%%%%%%%%%%%%%%%%%%%%%%%%%%%% WRAPFIGURE %%%%%%%%%%%%%%%%%%%%%%%%%%%%%%%
\begin{figure*}[htp!]
    \centering
    \includegraphics[width=0.5\textwidth]{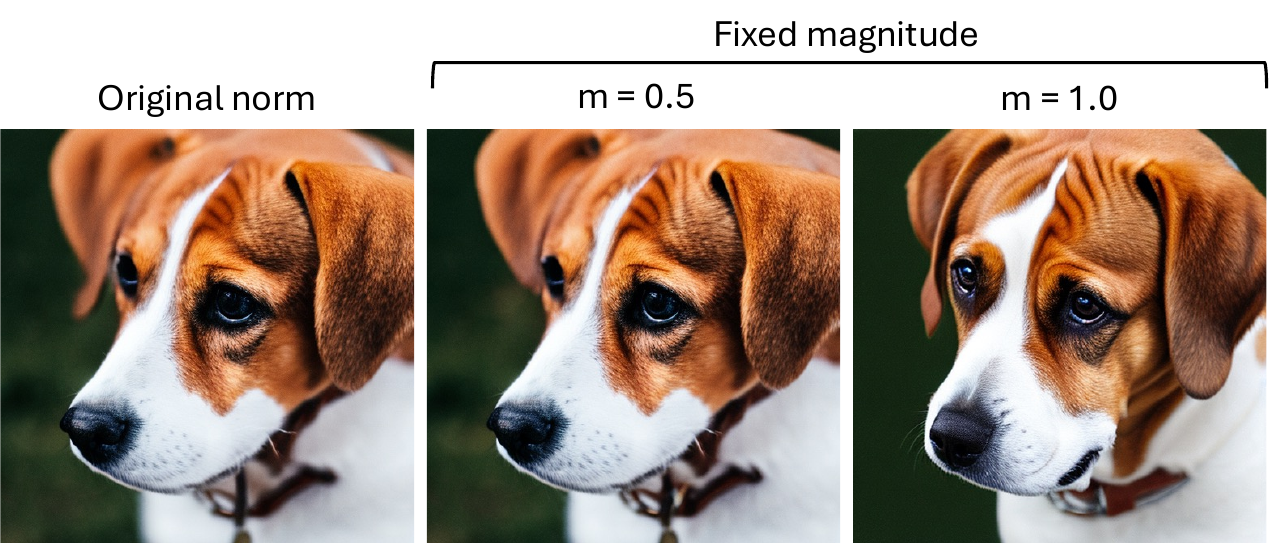}
    \caption{\textbf{Effect of magnitude change.} We set the magnitude to a fixed value to analyze the impact of magnitude changes. The resulting outputs show no noticeable difference.}
    \label{fig:magnitude}
\end{figure*}
%%%%%%%%%%%%%%%%%%%%%%%%%%%%%%%%%%%%%%%%%%%%%%%%%%%%%%%%%%%%
We altered the magnitude of the token as exemplified in Figure~\ref{fig:magnitude}. However, the resulting output remained mostly unchanged. This indicates that minor adjustments to the magnitude do not significantly affect the outcome.

%%%%%%%%%%%%%%%%%%%%%%%%%%%%% TABLE %%%%%%%%%%%%%%%%%%%%%%%%%%%%%%%
\begin{table}[ht!]
    \centering
    \caption{\textbf{Nearest tokens under different measures.} We show the nearest tokens to the query words `study' and `writing' using both cosine similarity and Euclidean distance.}
    \label{tab:nearest-app}
    \resizebox{\textwidth}{!}{
    \begin{tabular}{l|c|c}
        \toprule
        \textbf{Query} & \textbf{Cosine} & \textbf{Euclidean} \\
        \midrule
        \textbf{study} &
        \makecell[c]{studies, studying, research, bookclub,\\ 
        reading, studied, sketches, measurements, thumbnail} &
        \makecell[c]{\texttt{U+3160}, texanscheer, asober, instaweatherpro,\\ 
        mydayin, premiosmtvmiaw, tairp, thepersonalnetwork, \texttt{U+2412}} \\
        \midrule
        \textbf{writing} &
        \makecell[c]{writer, write, written, writ,\\
        writers, writings, recording, blogging, wrote} &
        \makecell[c]{phdlife, poetryday, tomorrowspaper, urstrulymahesh,\\
        @\_\_\_, twitterkurds, asober, fakespeare, jamiedor} \\
        \bottomrule
    \end{tabular}
    }
\end{table}
%%%%%%%%%%%%%%%%%%%%%%%%%%%%%%%%%%%%%%%%%%%%%%%%%%%%%%%%%%%%

In Table~\ref{tab:nearest-app}, we provide additional examples illustrating the nearest words retrieved for each query under different similarity measures, which strongly correlate with either direction or magnitude. Our analysis reveals that cosine similarity retrieves words that share semantic meaning with the query. Conversely, Euclidean distance is significantly affected by embedding magnitude, often retrieving words with limited or no semantic relevance. This demonstrates that semantic meaning is predominantly associated with embedding direction rather than magnitude. Note that words beginning with \texttt{U+} denote Unicode.

%%%%%%%%%%%%%%%%%%%%%%%%%%%%%%%%%%%%%%%%%%%%%%%%%%%%%%%%%%%%
\section{Proofs for theoretical statements}
\label{app:proofs}
\subsection{Setup}
\para{Pre-norm block.}
We study \emph{pre-norm} Transformer blocks
\begin{equation}
    \vx^{(\ell+1)} = \vx^{(\ell)} + F_\ell(\Norm(\vx^{(\ell)})), \quad \ell = 0, \dots, L-1,
\end{equation}
where $\Norm \in \{\text{LayerNorm}, \text{RMSNorm}\}$ (with optional affine ($\gamma, \beta$) absorbed into $F_\ell$).

\para{Scale invariance.}
For normalizations, we use the standard, scale-invariant definitions:
\begin{equation}
    \text{RMSN}(\vx) = \sqrt{d}\frac{\vx}{\| \vx \|_2}, \quad \LN(\vx) = \sqrt{d}\frac{\mC\vx}{\|\mC\vx \|_2}, \quad \mC := \mI - \frac{1}{d}\vone\vone^\top.
\end{equation}
Thus $\RMSN(s\vx) = \RMSN(\vx)$ and $\LN(s\vx) = \LN(\vx)$ for all $s > 0$.
Please refer to original papers \citep{ba2016layer,zhang_root_2019} for further details.

\para{Token decomposition.}
For the input token, we denote $\vx^{(0)} = m\vv + \vp$ with $m > 0$, $\| \vv \|_2 = 1$, and (optional) absolute positional embedding $\vp \in \mathbb{R}^d$.

\para{Bounded sub-layers.}
Define $\mathcal{S} = \{\Norm(\vz) : \vz \neq \vzero \}$.
Since $\Norm$ maps into a fixed scale, bounded set and $F_\ell$ (attention/MLP plus projections) is continuous on bounded sets,
\begin{equation}
    B_\ell := \sup_{\vu\in\mathcal{S}} \|F_\ell(\vu) \|_2 < \infty.
\end{equation}
Throughout, $\| \cdot \|_2$ denotes the Euclidean ($l_2$) norm.
% For vectors $\va, \vb \neq \bm{0}$, denoted by $\angle(\va, \vb) \in [0, \pi]$ their angle, and recall $\sin\angle(\va, \vb) \leq \frac{\| \vb - \va\|_2}{\| \va \|_2}$.

\subsection{Positional attenuation}
% ---------------- Lemma 1 ----------------
\begin{lemma}[Absolute positional embedding attenuates as $m \rightarrow \infty$]\label{lem:prenorm-scaling}
Let $\vx^{(0)}=m \vv + \vp$ with $\norm{\vv}_2=1$, $m>0$, and absolute positional embedding $\vp\in\R^d$.
Suppose $\Norm \in \{\text{LayerNorm}, \text{RMSNorm}\}$ and $\vv$ is non-degenerate for LayerNorm (i.e., its per-feature variance is nonzero; this holds for generic token embeddings). Then
\[
\bigl\|\Norm(m\vv+\vp)-\normop(m\vv)\bigr\|_2 = \mathcal{O}(\frac{\| \vp \|_2}{m})\quad\text{as } m \to \infty \text{ (with $\vv,\vp$ fixed)}.
\]
Hence the positional contribution shrinks linearly in $1/m$.
\end{lemma}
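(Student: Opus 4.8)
The plan is to exploit the scale invariance of $\Norm$ to factor out $m$ and then perform a first-order perturbation analysis in the small parameter $1/m$. The key observation is that $\Norm(m\vv+\vp) = \Norm(\vv + \vp/m)$ by scale invariance, so we may set $\eps := 1/m$ and study the map $\eps \mapsto \Norm(\vv + \eps\vp)$ as $\eps \to 0^+$. Since $\Norm(m\vv) = \Norm(\vv)$ as well, the quantity we must bound becomes $\norm{\Norm(\vv+\eps\vp) - \Norm(\vv)}_2$, and the claim reduces to showing this is $\mathcal{O}(\eps)$, i.e., that $\Norm$ is differentiable (Lipschitz suffices) at $\vv$ along the direction $\vp$.

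First I would treat the RMSNorm case, which is cleanest. Here $\RMSN(\vu) = \sqrt{d}\,\vu/\norm{\vu}_2$, so I am reduced to bounding $\norm{g(\vv+\eps\vp) - g(\vv)}_2$ where $g(\vu) = \vu/\norm{\vu}_2$ is the normalization map onto $\mathbb{S}^{d-1}$. Since $\vv \neq \vzero$, the map $g$ is smooth in a neighborhood of $\vv$, and its Jacobian there is bounded; a first-order Taylor expansion (or directly the mean value inequality applied to $g$ along the segment from $\vv$ to $\vv+\eps\vp$) gives $\norm{g(\vv+\eps\vp)-g(\vv)}_2 \le C\,\eps\norm{\vp}_2$ for small $\eps$, where $C$ depends only on $\norm{\vv}_2 = 1$ and the local Lipschitz constant of $g$. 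Multiplying by $\sqrt{d}$ and substituting back $\eps = 1/m$ yields the $\mathcal{O}(\norm{\vp}_2/m)$ bound.

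Next I would handle the LayerNorm case, which is structurally identical once I compose with the centering matrix $\mC$: we have $\LN(\vu) = \sqrt{d}\,\mC\vu/\norm{\mC\vu}_2 = \sqrt{d}\,g(\mC\vu)$, so $\LN(\vv+\eps\vp) = \sqrt{d}\,g(\mC\vv + \eps\mC\vp)$. Because $\mC$ is a fixed linear map, the same perturbation argument applies provided $\mC\vv \neq \vzero$. This is exactly where the non-degeneracy hypothesis enters: the per-feature variance of $\vv$ being nonzero is equivalent to $\mC\vv \neq \vzero$ (since $\mC$ projects off the all-ones direction, and $\norm{\mC\vv}_2^2$ is, up to the factor $d$, the per-feature variance of $\vv$). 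With $\mC\vv$ bounded away from $\vzero$, $g$ is again smooth near $\mC\vv$, and the Lipschitz/Taylor estimate carries through with $\norm{\mC\vp}_2 \le \norm{\vp}_2$ absorbed into the constant.

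The main obstacle is ensuring the perturbation bound is uniform as $\eps \to 0$, i.e., that the local Lipschitz constant of $g$ near $\vv$ (resp. near $\mC\vv$) does not blow up. This is controlled entirely by the distance of the base point from the origin: the Jacobian of $g$ at a point $\vu$ scales like $1/\norm{\vu}_2$, so the estimate is safe precisely because $\norm{\vv}_2 = 1$ (RMSNorm) or $\norm{\mC\vv}_2 > 0$ bounded below (LayerNorm) are fixed and bounded away from zero. For $\eps$ small enough that $\vv + \eps\vp$ (resp. $\mC\vv + \eps\mC\vp$) stays in a fixed neighborhood of $\vv$ (resp. $\mC\vv$) that excludes the origin, the constant $C$ is uniform, and the asymptotic statement follows. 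I would state the result as an asymptotic bound with $\vv,\vp$ held fixed, which is exactly the regime in the lemma, so no further uniformity over $\vp$ is required.
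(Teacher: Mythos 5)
Your proposal is correct and follows essentially the same route as the paper's proof: exploit scale invariance to rewrite $\Norm(m\vv+\vp)=\Norm(\vv+\vp/m)$, then apply a first-order perturbation argument to the normalization map $\vu\mapsto\vu/\norm{\vu}_2$ around $\vv$ (resp.\ around $\mC\vv$ for LayerNorm, where non-degeneracy guarantees $\mC\vv\neq\vzero$). The only cosmetic difference is that the paper writes out the explicit Taylor expansion with the tangential projector $(\mI-\vv\vv^\top)$, whereas you invoke the mean value inequality with a local Jacobian bound --- the same estimate in slightly less explicit form.
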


\begin{proof}[Proof.]
By scale invariance, $\Norm(m\vv+\vp) = \Norm\!\bigl(\vv+\varepsilon\bigr)$ with $\varepsilon:=\vp/m$, and $\Norm(m\vv)=\Norm(\vv)$.

\smallskip
\emph{RMSNorm.} With $\norm{\vv}=1$,
\[
\frac{\vv+\varepsilon}{\norm{\vv+\varepsilon}}
= \vv + (\mI_d-\vv\vv^\top)\varepsilon + \mathcal{O}(\norm{\varepsilon}^2),
\]
hence $\RMSN(\vv+\varepsilon)-\RMSN(\vv)
= \sqrt d\,(\mI_d-\vv\vv^\top)\varepsilon + \mathcal{O}(\norm{\varepsilon}^2)$ and
$\norm{\RMSN(m\vv+\vp)-\RMSN(m\vv)} \le \sqrt d\,\norm{\vp}/m + O(m^{-2})$.

\smallskip
\emph{LayerNorm.} Write $\va:=\mC \vv\neq \vzero$, $\vu:=\va/\norm{\va}$. Then
\[
\frac{\va+\mC\varepsilon}{\norm{\va+\mC\varepsilon}}
= \vu + \frac{(\mI_d-\vu\vu^\top)\mC\varepsilon}{\norm{\va}} + \mathcal{O}(\norm{\varepsilon}^2),
\]
so $\| \LN(m\vv+\vp)-\LN(m\vv) \| \leq \sqrt d\,\frac{\| (\mI_d-\vu\vu^\top)\mC \vp \|}{m\norm{\mC \vv}} + \mathcal{O}(m^{-2})$,
which is $\mathcal{O}(\norm{\vp}/m)$.
\end{proof}

\subsection{Residual stagnation}
% ---------------- Lemma 2 ----------------
\begin{lemma}[Residual stagnation in a pre-norm block]\label{lem:residual}
Let $\vx^{(\ell+1)} = \vx^{(\ell)} + F_\ell(\Norm(\vx^{(\ell)}))$ with $\vx^{(\ell)} \neq \vzero$ and $\Norm\in\{\LN,\RMSN\}$, and let
\[
B_\ell := \sup_{\vu \in \mathcal{S}} \|F_\ell(\vu) \|_2 < \infty.
\]
Then
\[
\frac{\|\vx^{(\ell+1)}-\vx^{(\ell)}\|_2}{\|\vx^{(\ell)}\|_2}\le \frac{B_\ell}{\|\vx^{(\ell)}\|_2}.
\]
If additionally \(B_\ell < \|\vx^{(\ell)}\|_2\),
\[
\angle({\vx^{(\ell)}},{\vx^{(\ell+1)}})\le \arcsin\!\Bigl( \frac{B_\ell}{\|\vx^{(\ell)}\|_2}\Bigr).
\]
\end{lemma}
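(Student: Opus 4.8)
The plan is to bound both the relative displacement and the turning angle using only the boundedness of the sub-layer $F_\ell$ on the normalized set $\mathcal{S}$. First I would observe that the residual update is $\vx^{(\ell+1)} - \vx^{(\ell)} = F_\ell(\Norm(\vx^{(\ell)}))$, and since $\Norm(\vx^{(\ell)}) \in \mathcal{S}$ (because $\vx^{(\ell)} \neq \vzero$), its norm is at most $B_\ell$ by the definition of the supremum. Dividing through by $\|\vx^{(\ell)}\|_2$ immediately yields the first inequality,
\[
\frac{\|\vx^{(\ell+1)} - \vx^{(\ell)}\|_2}{\|\vx^{(\ell)}\|_2} = \frac{\|F_\ell(\Norm(\vx^{(\ell)}))\|_2}{\|\vx^{(\ell)}\|_2} \le \frac{B_\ell}{\|\vx^{(\ell)}\|_2}.
\]
This part is essentially definitional and requires no real work.

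For the angular bound I would set up the geometry explicitly. Writing $\va := \vx^{(\ell)}$ and $\vb := \vx^{(\ell+1)} = \va + \vr$ where $\vr := F_\ell(\Norm(\vx^{(\ell)}))$ with $\|\vr\|_2 \le B_\ell$, the angle $\theta := \angle(\va, \vb)$ between the old and new hidden state is what we want to bound. The key geometric fact is that the displacement vector $\vr$ connects the tip of $\va$ to the tip of $\vb$, so the perpendicular distance from the tip of $\vb$ to the ray spanned by $\va$ is at most $\|\vr\|_2 \le B_\ell$. The sine of the turning angle is (perpendicular component of $\vb$)$/\|\vb\|_2$, but a cleaner bound comes from noting that $\sin\theta$ equals the distance from $\vb$'s endpoint to the line through $\va$, divided by $\|\vb\|_2$; since that distance is at most $\|\vr\|_2$ and we can instead normalize by $\|\va\|_2$.

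The cleanest route is to use the identity relating the angle to the cross-product-like quantity: $\sin\theta = \|\va \times \vb\|/(\|\va\|\,\|\vb\|)$ in the plane spanned by $\va$ and $\vr$. Since $\vb = \va + \vr$, bilinearity gives $\va \times \vb = \va \times \vr$, so $\|\va \times \vb\| = \|\va\|\,\|\vr\|\sin\phi \le \|\va\|\,\|\vr\| \le \|\va\|\,B_\ell$, where $\phi$ is the angle between $\va$ and $\vr$. Therefore
\[
\sin\theta = \frac{\|\va \times \vb\|}{\|\va\|\,\|\vb\|} \le \frac{\|\va\|\,B_\ell}{\|\va\|\,\|\vb\|} = \frac{B_\ell}{\|\vb\|_2}.
\]
To finish, I would replace $\|\vb\|_2$ in the denominator by $\|\va\|_2 = \|\vx^{(\ell)}\|_2$: this is the one place needing a small argument, and it is valid precisely when $B_\ell \le \|\vx^{(\ell)}\|_2$ so that the arcsine is well-defined. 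Either one argues $\|\vb\|_2 \ge \|\va\|_2$ when $\vr$ has nonnegative projection onto $\va$, or—more robustly—one bounds $\sin\theta$ directly by the perpendicular displacement over $\|\va\|_2$, giving $\sin\theta \le B_\ell/\|\vx^{(\ell)}\|_2$ and hence $\theta \le \arcsin(B_\ell/\|\vx^{(\ell)}\|_2)$ by monotonicity of arcsine on $[0,1]$.

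The main obstacle is the denominator normalization: getting the bound stated with $\|\vx^{(\ell)}\|_2$ (the \emph{incoming} norm) rather than $\|\vx^{(\ell+1)}\|_2$ in the arcsine. The safest fix, which avoids any sign assumption on the projection of $\vr$ onto $\va$, is to compute $\sin\theta$ as the ratio of the component of $\vr$ orthogonal to $\va$ to the length $\|\vb\|_2$, then bound the orthogonal component by $\|\vr\|_2 \le B_\ell$ and bound $\|\vb\|_2$ from below appropriately; alternatively one directly uses that the distance from the endpoint of $\vb$ to the line $\mathrm{span}(\va)$ equals $\|\vb\|_2\sin\theta$ and is at most $\|\vr\|_2$, which after rearranging and substituting the first inequality delivers $\arcsin(B_\ell/\|\vx^{(\ell)}\|_2)$ cleanly. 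I expect this to be a one-line geometric observation once the displacement picture is drawn, so the proof overall is short.
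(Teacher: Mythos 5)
Your first inequality is exactly the paper's argument and is fine. The angle bound is where your proposal has a genuine gap. Your cross-product computation correctly yields $\sin\theta \le B_\ell/\norm{\vb}$, i.e.\ with the \emph{outgoing} norm $\norm{\vx^{(\ell+1)}}$ in the denominator, but none of the bridges you offer to the stated denominator $\norm{\vx^{(\ell)}}$ actually works. (i) Assuming $\vr$ has nonnegative projection onto $\va$ is an assumption not in the statement. (ii) Your ``more robust'' claim, $\sin\theta \le$ (perpendicular component of the update)$/\norm{\va}$, is false: take $\va=(1,0)$, $\vr=(-0.9,\,0.1)$, so $\vb=(0.1,\,0.1)$; then $\theta = 45^\circ$ and $\sin\theta\approx 0.707$, while the orthogonal component of $\vr$ has norm $0.1$, so the claimed bound would give $0.1$. (iii) Lower-bounding $\norm{\vb}\ge\norm{\va}-B_\ell$ only gives $\sin\theta\le B_\ell/(\norm{\va}-B_\ell)$, strictly weaker than the lemma (and vacuous in the example above, where the lemma's bound $\arcsin(0.906)\approx 65^\circ$ is valid and nontrivial). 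The underlying issue is that the distance from the tip of $\vb$ to the line $\mathrm{span}(\va)$ pairs with $\norm{\vb}$, never with $\norm{\va}$; no amount of bounding that distance fixes the denominator.

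The repair is a one-line change inside your own argument: expand the cross product in the other factor. Writing $\bm{\delta} := F_\ell(\Norm(\vx^{(\ell)}))$ and $\va = \vb - \bm{\delta}$,
\[
\norm{\va\times\vb} \;=\; \norm{(\vb-\bm{\delta})\times\vb} \;=\; \norm{\bm{\delta}\times\vb} \;\le\; \norm{\bm{\delta}}\,\norm{\vb},
\qquad\text{hence}\qquad
\sin\theta \;=\; \frac{\norm{\va\times\vb}}{\norm{\va}\,\norm{\vb}} \;\le\; \frac{\norm{\bm{\delta}}}{\norm{\va}} \;\le\; \frac{B_\ell}{\norm{\vx^{(\ell)}}_2}.
\]
Equivalently and dimension-free: $\sin\theta = \mathrm{dist}(\va,\mathrm{span}(\vb))/\norm{\va}$, and $\mathrm{dist}(\va,\mathrm{span}(\vb)) \le \norm{\va-\vb} = \norm{\bm{\delta}}$ since $\vb\in\mathrm{span}(\vb)$; this is also the law of sines in the triangle $(\vzero,\va,\vb)$, which gives $\sin\theta = \norm{\bm{\delta}}\sin\beta/\norm{\va}$ with $\beta$ the angle at $\vb$. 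Note it is the \emph{full} displacement $\norm{\bm{\delta}}$, not its orthogonal part, that must sit in the numerator --- your counterexample-prone step tried to have both the smaller numerator and the smaller denominator, which is exactly what fails. Finally, for the arcsine conversion one should note that when $B_\ell\le\norm{\va}$ (required anyway for $\arcsin$ to be defined), $\va^{\mathsf T}\vb \ge \norm{\va}^2 - \norm{\va}\norm{\bm{\delta}} \ge 0$, so $\theta\le\pi/2$ and $\theta = \arcsin(\sin\theta)$. For what it is worth, the paper's own proof is equally terse at this point (``a short calculation shows'') and also gestures at the orthogonal component of $\bm{\delta}$, but the inequality it asserts, $\sin\theta \le \norm{\bm{\delta}}/\norm{\vx^{(\ell)}}$, is the correct one; the calculation above is the missing short calculation in both your write-up and theirs.
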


\begin{proof}
Since $\Norm(\vx^{(\ell)})\in S$, we have $\|\vx^{(\ell+1)}-\vx^{(\ell)}\|_2=\|F_\ell(\Norm(\vx^{(\ell)}))\|_2 \le B_\ell$, giving the first bound.
Write $\vx^{(\ell+1)}=\vx^{(\ell)}+\delta$. The orthogonal component of $\delta$ is at most $\|\delta\|$; a short calculation shows
$\sin\angle({\vx^{(\ell)}},{\vx^{(\ell+1)}}) \le \|\delta\|_2/\|\vx^{(\ell)}\|_2 \le B_\ell/\|\vx^{(\ell)}\|_2$, which implies the stated angle bound.
\end{proof}

% ---------------- Proposition 1 ----------------
\begin{proposition}[Accumulated directional drift across $L$ pre-norm blocks]\label{prop:stack_full}
Let $\vx^{(0)}\neq \vzero$ and $\vx^{(\ell+1)} = \vx^{(\ell)} + F_\ell(\mathrm{Norm}(\vx^{(\ell)}))$ for $\ell=0,\dots,L-1$.
Let $B_\ell:=\sup_{\vu\in S}\|F_\ell(\vu)\|_2<\infty$, and $S_L:=\sum_{j=0}^{L-1} B_j$.
Assume $\|\vx^{(0)}\|_2> S_L$, then
\[
\angle\!\big(\vx^{(0)},\vx^{(L)}\big)\;\le\; \frac{\pi}{2}\sum_{\ell=0}^{L-1}\!\frac{B_\ell}{\|\vx^{(0)}\|_2-\sum_{j<\ell}B_j}
\;\le\; \frac{\pi}{2}\frac{S_L}{\|\vx^{(0)}\|_2 - S_L}.
\]
\end{proposition}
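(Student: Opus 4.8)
The plan is to combine the single-block angle bound from Lemma~\ref{lem:residual} with a triangle inequality for angles, while carefully tracking how the norm $\|\vx^{(\ell)}\|_2$ can shrink as $\ell$ grows. First I would establish a lower bound on each intermediate norm. By the first inequality of Lemma~\ref{lem:residual} (or just the triangle inequality on $\vx^{(\ell+1)}=\vx^{(\ell)}+F_\ell(\Norm(\vx^{(\ell)}))$), each step can decrease the norm by at most $B_\ell$, so by induction $\|\vx^{(\ell)}\|_2 \ge \|\vx^{(0)}\|_2 - \sum_{j<\ell} B_j$. The hypothesis $\|\vx^{(0)}\|_2 > S_L$ guarantees that this lower bound stays strictly positive for every $\ell \le L-1$, so none of the $\vx^{(\ell)}$ vanish and every per-block angle is well-defined.

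Next I would bound each per-block turning angle. Applying the angle bound from Lemma~\ref{lem:residual} at step $\ell$ and then inserting the norm lower bound just derived gives
\[
\angle(\vx^{(\ell)},\vx^{(\ell+1)}) \;\le\; \arcsin\!\Bigl(\frac{B_\ell}{\|\vx^{(\ell)}\|_2}\Bigr) \;\le\; \arcsin\!\Bigl(\frac{B_\ell}{\|\vx^{(0)}\|_2-\sum_{j<\ell}B_j}\Bigr).
\]
The key analytic step is then to replace $\arcsin$ by a clean linear expression: for $t\in[0,1]$ one has $\arcsin(t)\le \frac{\pi}{2}t$, since $\arcsin$ is convex on $[0,1]$ and the chord from $(0,0)$ to $(1,\tfrac{\pi}{2})$ dominates it. Applying this with $t = B_\ell/(\|\vx^{(0)}\|_2-\sum_{j<\ell}B_j)\le 1$ (which holds because the denominator is at least $\|\vx^{(0)}\|_2 - S_L + B_\ell \ge B_\ell$) yields the per-term bound $\frac{\pi}{2}\,B_\ell/(\|\vx^{(0)}\|_2-\sum_{j<\ell}B_j)$.

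To obtain the accumulated bound I would invoke subadditivity of the angle: the geodesic distance $\angle(\cdot,\cdot)$ on directions is a metric on $\mathbb{S}^{d-1}$ (after projecting to unit vectors), so the triangle inequality gives
\[
\angle(\vx^{(0)},\vx^{(L)}) \;\le\; \sum_{\ell=0}^{L-1}\angle(\vx^{(\ell)},\vx^{(\ell+1)}) \;\le\; \frac{\pi}{2}\sum_{\ell=0}^{L-1}\frac{B_\ell}{\|\vx^{(0)}\|_2-\sum_{j<\ell}B_j},
\]
which is the first claimed inequality. The second, coarser inequality follows by bounding each denominator from below by its smallest value: since $\sum_{j<\ell}B_j \le S_L$ for every $\ell$, we have $\|\vx^{(0)}\|_2 - \sum_{j<\ell}B_j \ge \|\vx^{(0)}\|_2 - S_L$, so each term is at most $\frac{\pi}{2}B_\ell/(\|\vx^{(0)}\|_2 - S_L)$; summing over $\ell$ and using $\sum_\ell B_\ell = S_L$ gives $\frac{\pi}{2}\,S_L/(\|\vx^{(0)}\|_2-S_L)$.

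The main obstacle I anticipate is justifying the triangle inequality for angles rigorously: angle between vectors is only a genuine metric once one passes to the induced directions (unit-normalized vectors), and one must confirm that all intermediate vectors are nonzero so the normalization and the $\arcsin$ bounds are legitimate — this is exactly what the norm lower bound and the assumption $\|\vx^{(0)}\|_2 > S_L$ secure. A minor secondary point is checking the argument of $\arcsin$ stays in $[0,1]$ at every step so that the $\arcsin(t)\le\frac{\pi}{2}t$ inequality applies; this again reduces to the same positive-denominator condition.
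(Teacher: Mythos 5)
Your proof is correct and follows essentially the same route as the paper's: the per-block angle bound of Lemma~\ref{lem:residual}, the telescoping lower bound $\|\vx^{(\ell)}\|_2 \ge \|\vx^{(0)}\|_2 - \sum_{j<\ell}B_j$, the linearization $\arcsin(t)\le\tfrac{\pi}{2}t$, the spherical triangle inequality for summing angles, and the final coarsening of the denominators. The only difference is that you supply details the paper leaves implicit (convexity justification for the $\arcsin$ bound, positivity of intermediate norms, and the metric property of the angle), which strengthens rather than changes the argument.
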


\begin{proof}[Proof.]
Let $\theta_\ell:=\angle(x^{(\ell)},x^{(\ell+1)})$. By Lemma~\ref{lem:residual} above,
$\theta_\ell\le \arcsin\!\big(B_\ell/\norm{x^{(\ell)}}\big)\le \tfrac{\pi}{2}\,B_\ell/\norm{x^{(\ell)}}$.
Also $\norm{x^{(\ell)}}\ge \norm{x^{(0)}}-\sum_{j<\ell}B_j$ (each step $j$ can shrink the norm by at most $B_j$).
Summing angles (spherical triangle inequality) gives the first display; since
$\norm{x^{(0)}}-\sum_{j<\ell}B_j \ge \norm{x^{(0)}}-S_L$, each fraction is
$\le B_\ell/(\norm{x^{(0)}}-S_L)$, yielding the last bound.
\end{proof}

%%%%%%%%%%%%%%%%%%%%%%%%%%%%%%%%%%%%%%%%%%%%%%%%%%%%%%%%%%%%
\section{Extended Methods}
%%%%%%%%%%%%%%%%%%%%%%%%%%%%%%%%%%%%%%%%%%%%%%%%%%%%%%%%%%%%

\subsection{RSGD for token embedding optimization}\label{app:rsgd}
We observe that gradient magnitudes tend to increase as training progresses, which often leads to instability in the later stages. Although standard learning rate schedules can help mitigate this issue, the gradient dynamics vary considerably across different datasets and training settings, limiting the effectiveness of fixed schedules. To address this, we draw inspiration from adaptive optimization techniques in Euclidean space~\citep{kingma_adam_2015, duchi2011adaptive} and propose a simple yet effective gradient scaling scheme based on gradient norms:
\begin{equation}
    \vg^\prime_k = \vg_k / \| \vg_k \|_2,
\end{equation}
where $\vg_k$ is the gradient at iteration $k$.
This normalization is equivalent to using an adaptive step size $\eta/\|\vg_k\|$ in a Euclidean update $\vv_{k+1} = \vv_k - (\eta/\|\vg_k \|_2)\vg_k$; the update direction is still $\vg_k$, but the length is fixed to $\eta$, preventing very large gradients from causing excessively large parameter updates.
% This approach allows the learning rate to scale inversely with the gradient magnitude, reducing the step size when gradients are large and thereby promoting stability during training.
Note that a similar technique was previously explored in the context of Riemannian optimization~\citep{cho_riemannian_2017}.

\subsection{Why vMF over other distributions?}
\label{app:vmf}
We chose the von Mises-Fisher (vMF) distribution as it is ideally suited for modeling the directional characteristics of token embeddings we identified in Section~\ref{sec:obs}. Our central hypothesis is that the token embedding vocabulary can be modeled as a \textbf{mixture of vMF distributions}, where each component corresponds to a distinct semantic cluster (e.g., one for animals, another for objects). The vMF distribution is a suitable building block for this model for three key reasons:
\begin{itemize}
    \item \textbf{It's a natural fit.} The vMF is the natural analog to the Gaussian distribution on a hypersphere, making it a principled and standard choice for modeling directional data clusters.
    \item \textbf{It's computationally efficient.} The vMF's mathematical form is exceptionally convenient for optimization. In our MAP formulation, the gradient of the log-prior is a \textit{constant-direction vector} ($-\kappa\bf{\mu}$), which provides a stable and efficient semantic pull without requiring complex calculations. This simplicity makes it more suitable for high-dimensional embeddings in large-scale models than alternatives like the Kent and Bingham distributions.
    \item \textbf{It's interpretable and controllable.} The parameters are easy to understand. The mean direction $\bm{\mu}$ serves as a \emph{semantic anchor} to prevent the learned token from drifting away from related concepts, while the concentration $\kappa$ allows us to control the strength of this regularization.
\end{itemize}
These factors collectively make the vMF distribution a superior choice for our application, providing the necessary regularization in a way that is both mathematically principled and computationally tractable.

%%%%%%%%%%%%%%%%%%%%%%%%%%%%%%%%%%%%%%%%%%%%%%%%%%%%%%%%%%%%
\section{Extended Experiments}
%%%%%%%%%%%%%%%%%%%%%%%%%%%%%%%%%%%%%%%%%%%%%%%%%%%%%%%%%%%%

\subsection{Implementation Details}
\label{app:impl-exp}
Following the protocol of recent studies, we primarily conducted experiments using Stable Diffusion XL (SDXL).
To demonstrate broader applicability to different models, we also conducted experiments with the recent model SANA 1.5~\citep{xie2025sana}; these results are presented in Table~\ref{tab:main}.

For a fair comparison, we adopted most hyperparameter settings from the Textual Inversion (TI) implementation provided by the Hugging Face \texttt{diffusers} library~\citep{von-platen-etal-2022-diffusers}. We used a training batch size of 4 and enabled mixed-precision training with the bfloat16 (bf16) format. For the baselines, the learning rate was set to the commonly used value of $5\times10^{-3}$. Since our method employs a different optimizer, we conducted a separate learning rate search and set it to $0.02$. Following \cite{rombach_high-resolution_2022} and \cite{gal_image_2023}, we scaled the base learning rate proportionally to the batch size for all methods.

All experiments were performed with a fixed random seed of 42, and the maximum number of training steps was set to 500 for SDXL and 1000 for SANA. For image generation, we used the \texttt{DDIMScheduler} with 50 inference steps for SDXL and the \texttt{FlowMatchEulerDiscreteScheduler} with 20 inference steps for SANA.

% \para{Hyperparameters.}
% There are various approaches to selecting the concentration parameter $\kappa$. We performed a grid search and found that values in the range of $5\times10^{-5}$ to $2\times10^{-4}$ work well.
% Therefore, we did not conduct an extensive search for an optimal decimal value.
% Throughout the experiments, we simply fixed the value at $10^{-4}$, which generalizes well to experiments with different settings.  
% Examples illustrating the effects of different $\kappa$ settings are provided in Table~\ref{tab:ablation}.
\para{Hyperparameters.}
There are various approaches to selecting the concentration parameter $\kappa$. We performed a grid search and found that values in the range of $5\times10^{-5}$ to $2\times10^{-4}$ work well. Therefore, we did not conduct an exhaustive search for a more precise optimal value. Throughout our experiments, we simply fixed $\kappa$ at $10^{-4}$, which generalizes well across different settings. Examples illustrating the effects of varying $\kappa$ are provided in Table~\ref{tab:ablation}.

\para{Baselines.}
Throughout this paper, we compare our method with two baseline approaches: Textual Inversion (TI)~\citep{gal_image_2023} and CrossInit~\citep{pang2024cross}.
Since the official CrossInit implementation is based on Stable Diffusion v2.1 with hyperparameters tailored to that version, we reconfigure it to operate on SDXL by aligning its training setup with that of TI. Specifically, we adopt the same hyperparameters as used for TI, and we set the regularization weight for CrossInit to $10^{-5}$, as specified in the original paper. 

\subsection{On the choice of prior}
\label{app:prior}
For all of our experiments in the main section, we used the same class tokens as those specified in the DreamBooth dataset as priors. However, we would like to note that since our DTI can leverage the prior, searching for better priors can lead to better results. This demonstrates the effectiveness of the prior.

To test this, we experimented with having a VLM recommend initial tokens. More specifically, we provided reference images to the VLM and asked it to recommend 1-2 words that best describe them. For the experiments, we used Qwen-VL 2.5~\citep{bai2025qwen2} as the VLM. The results are shown in Table~\ref{tab:vlm_priors}.

The results indicate that changing the prior affects performance, although the overall effect is modest. For both TI and our DTI, Qwen-VL initialization tends to increase subject similarity, accompanied by a slight decrease in text fidelity. Practitioners may leverage VLMs or manually craft priors with targeted terms to emphasize desired attributes. Overall, these findings demonstrate the flexibility and effectiveness of leveraging priors.

\begin{table}[htbp]
\centering
\caption{Results with VLM-recommended priors. We compare Qwen-VL recommended initial tokens with DreamBooth initial tokens as priors for DTI.}
\label{tab:vlm_priors}
\resizebox{0.65\linewidth}{!}{
    \begin{tabular}{llcccc}
    \toprule
     &  & \multicolumn{2}{c}{\textbf{SDXL}} & \multicolumn{2}{c}{\textbf{SANA}} \\
    \cmidrule(lr){3-4} \cmidrule(lr){5-6}
    \textbf{Method} & \textbf{Initialization} & \textbf{Image} & \textbf{Text} & \textbf{Image} & \textbf{Text} \\
    \midrule
    \multirow{2}{*}{TI} & DreamBooth init & 0.561 & 0.292 & 0.480 & 0.621 \\
     & Qwen-VL init & 0.583 & 0.273 & 0.501 & 0.619 \\
    \midrule
    \multirow{2}{*}{DTI (ours)} & DreamBooth init & 0.450 & 0.522 & 0.479 & 0.744 \\
     & Qwen-VL init & 0.520 & 0.391 & 0.504 & 0.697 \\
    \bottomrule
    \end{tabular}
    }
\end{table}

\subsection{Comparison with other baselines}
\label{app:baselines}
We expand our comparative analysis to include additional baselines: P+~\citep{voynov_p_2023}, NeTI~\citep{alaluf2023neural}, and CoRe~\citep{wu2025core}. 
We run these experiments mainly on SD1.5 and SD2.1-base as these baseline papers work on those versions.
Adhering to the evaluation protocol of the main paper, we measure subject similarity using DINOv2 similarity and prompt fidelity with the CLIP-variant, SigLIP. 
The results demonstrate that across both architectures, DTI consistently achieves the most favorable balance between these metrics compared to all baselines.
Qualitative comparison can be found in Figure~\ref{fig:rebuttal-2}.

%%%%%%%%%%%%%%%%%%%%%%%%%%%%% TABLE %%%%%%%%%%%%%%%%%%%%%%%%%%%%%%%
\begin{table}[htbp]
    \centering
    \label{tab:baselines}
    \caption{\textbf{Results on SD1.5 and SD2.1-base.} We compare the baselines that improve TI on different versions of Stable Diffusion. DTI achieves the best balance between subject similarity and text fidelity compared to other baselines.}
    \resizebox{0.57\linewidth}{!}{
    \begin{tabular}{lcccc}
    \toprule
    \multirow{2}{*}{\textbf{Method}} & \multicolumn{2}{c}{\textbf{SD1.5}} & \multicolumn{2}{c}{\textbf{SD2.1-base}} \\
    \cmidrule(r){2-3} \cmidrule(l){4-5}
    & \textbf{Image} & \textbf{Text} & \textbf{Image} & \textbf{Text} \\
    \midrule
    P+~\citep{voynov_p_2023} & 0.273 & \textbf{0.719} & 0.238 & \textbf{0.663} \\
    NeTI~\citep{alaluf2023neural} & 0.408 & 0.579 & 0.565 & 0.517 \\
    CoRe~\citep{wu2025core} & 0.340 & 0.661 & 0.357 & 0.654 \\
    \rowcolor{gray!15}
    DTI (ours) & \textbf{0.418} & 0.554 & \underline{0.469} & 0.568 \\
    \bottomrule
    \end{tabular}
    }
\end{table}
%%%%%%%%%%%%%%%%%%%%%%%%%%%%% TABLE %%%%%%%%%%%%%%%%%%%%%%%%%%%%%%%

\paragraph{DTI as a drop-in replacement for TI.}
Although DTI is primarily designed for embedding-only personalization, it also functions effectively as a drop-in replacement within fine-tuning pipelines. Recent work on Direct Consistency Optimization (DCO; \citealp{lee_direct_2024}) typically performs joint optimization of a concept token using standard Textual Inversion (TI). To assess the impact of substituting this TI component with DTI, we conducted joint training for 250 steps using a LoRA module with rank 4.

As shown in Table~\ref{tab:dco}, the conventional TI-based joint training exhibits limited text–alignment ($0.456$), whereas replacing TI with DTI substantially improves alignment to $0.635$. This quantitative improvement is further reflected in Figure~\ref{fig:dco}, where our method accurately incorporates textual attributes (e.g., red backpack), while DCO with TI fails to do so.

\begin{table}[htbp]
    \centering
    \caption{\textbf{DCO experiments.} Comparison of DCO with standard Textual Inversion (TI) versus DCO initialized with our DTI. Integrating our method significantly improves text alignment.}
    \label{tab:dco}
\centering
    \resizebox{0.37\linewidth}{!}{
    \begin{tabular}{lcc}
        \toprule
        \textbf{Method} & \textbf{Image} & \textbf{Text} \\
        \midrule
        DCO & \textbf{0.605} & 0.456 \\
        \rowcolor{gray!15} DCO $+$ DTI (ours) & 0.568 & \textbf{0.635} \\
        \bottomrule
    \end{tabular}
    }
\end{table}

\begin{figure*}[htp!]
    \centering
    \includegraphics[width=\linewidth]{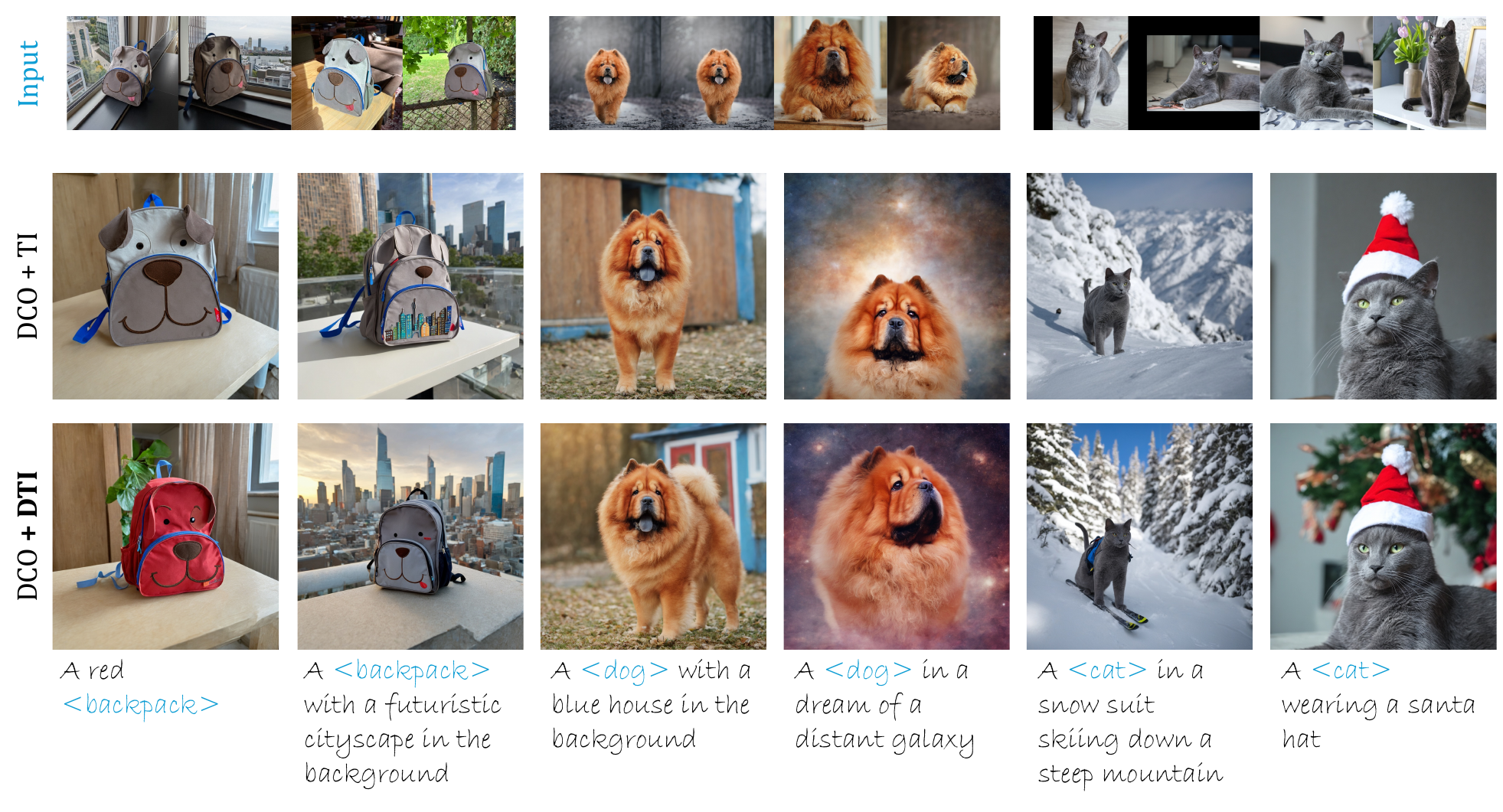}
    \caption{\textbf{Qualitative results with DCO~\citep{lee_direct_2024}.} We provide a comparison of the output images of DCO + TI and DCO + DTI. The results suggest our DTI's superiority in text fidelity while preserving strong image similarity.}
    \label{fig:dco}
\end{figure*}

\subsection{Ablation study}
\label{app:ablation}
\para{Effect of Riemannian optimization.}
Our DTI framework employs Riemannian optimization to ensure embeddings lie on the spherical manifold $\mathbb{S}^{n-1}$. 
An alternative is to simply re-scale embeddings after each Euclidean optimization step to achieve this constraint. 
However, Table~\ref{tab:ablation} (first row) shows this latter Euclidean-based approach with re-scaling achieves suboptimal results, highlighting the benefit of direct Riemannian optimization.

\para{Effect of magnitude ($m$).}
We investigated the impact of the fixed embedding magnitude, $m$, on personalization performance. Our DTI framework, by default, sets $m$ to the average norm observed in the pre-trained CLIP token vocabulary. 
We compared this ``mean'' strategy under the Riemannian optimization setting with $\kappa = 1e-4$:
\begin{itemize}
    \item Setting $m$ to the minimum vocabulary norm (``min'').
    \item Setting $m$ to the mean vocabulary norm (``mean'').
    \item Setting $m$ to a large, out-of-distribution (OOD) value of 5.0.
\end{itemize}
As shown in Table~\ref{tab:ablation}:
\begin{itemize}
    \item The ``mean'' strategy achieves the highest subject similarity and strong text fidelity.
    \item The ``min'' strategy results in significantly poorer performance in both metrics.
    \item Using an OOD magnitude of $5.0$ also leads to a degradation in both metrics.
\end{itemize}
These results validate our choice of fixing the magnitude to an in-distribution scale, specifically the average vocabulary norm, as it provides a strong balance of subject similarity and text alignment. 
Both excessively small (``min'') and out-of-distribution large (``OOD'') magnitudes are detrimental.

\para{Effect of concentration parameter ($\kappa$).}
The concentration parameter $\kappa$ of the von Mises-Fisher (vMF) prior controls the strength of the directional regularization. We analyzed its effect by varying $\kappa$ while using Riemannian optimization and the ``mean'' embedding magnitude. 
We tested $\kappa = 0.0$ (no prior), $\kappa = 1e-4$ (DTI default), and $\kappa = 5e-4$.

The results in Table~\ref{tab:ablation} indicate:
\begin{itemize}
    \item With $\kappa = 1e-4$, we observe the best balance between subject similarity and text fidelity.
    \item Setting $\kappa = 0.0$, which removes the directional prior, leads to lower scores in text fidelity, which suggests that the directional prior improves semantic alignment.
    \item Increasing the regularization strength with $\kappa = 5e-4$ yields the highest text fidelity among the tested values but at the cost of reduced subject similarity.
\end{itemize}
Overall, our default choice of $\kappa = 1e-4$ provides a better balance between maintaining subject similarity and ensuring text fidelity. 
Note that $\kappa = 1e-4$ may not be numerically optimal across all criteria but works reasonably well by providing robust overall performance.

\subsection{Details of user study}
\label{app:user}
%%%%%%%%%%%%%%%%%%%%%%%%%%%%% FIGURE %%%%%%%%%%%%%%%%%%%%%%%%%%%%%%%
\begin{figure*}[htp!]
    \centering
    \includegraphics[width=\linewidth]{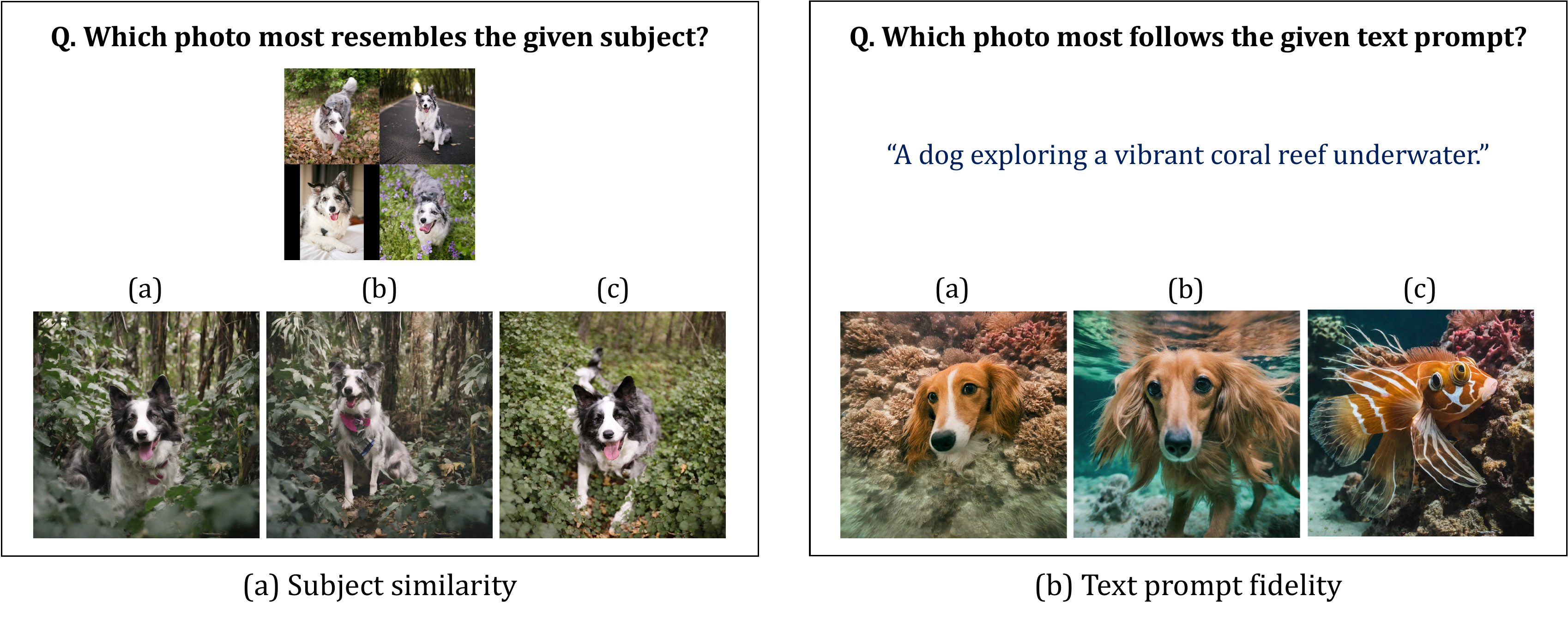}
    \caption{\textbf{User study design.} We conducted a user study with 100 participants recruited via Amazon Mechanical Turk to evaluate 20 questions. The evaluation focused on two key aspects: subject similarity (10 questions) and text prompt fidelity (10 questions). To ensure fair comparison, the random seed was fixed and option order was shuffled.}

    \label{fig:user}
\end{figure*}
%%%%%%%%%%%%%%%%%%%%%%%%%%%%% FIGURE %%%%%%%%%%%%%%%%%%%%%%%%%%%%%%%
To evaluate real-world user preferences for image generation quality, we conducted a comprehensive user study involving 100 participants recruited through Amazon Mechanical Turk. Each participant completed a survey consisting of 20 questions, evenly divided into two critical evaluation criteria: subject similarity and text prompt fidelity. For each question, participants were presented with three distinct image options, generated by: Textual Inversion~\citep{gal_image_2023}, CrossInit~\citep{pang2024cross}, and our proposed Directional Textual Inversion (DTI).
The order of these three choices was randomized for each question, using a fixed random seed to ensure consistent shuffling across all participants. Sample questions can be found in Figure~\ref{fig:user}. We collected a total of 96 valid responses, with 4 submissions being excluded due to invalid patterns such as selecting the same answer for all questions.
The results, as detailed in Table~\ref{tab:user} (in the main paper), demonstrate that our Directional Textual Inversion (DTI) consistently outperforms both Textual Inversion and CrossInit across both evaluation metrics: image subject similarity and text prompt fidelity. These findings confirm the superior performance of our proposed method in generating images that more accurately align with user expectations regarding both visual content and textual descriptions.

\subsection{More qualitative results}
\label{app:quali}
% TI vs CrossInit vs DTI
We present additional qualitative comparisons with TI-based approaches~\citep{gal_image_2023, pang2024cross} in Figure~\ref{fig:quali-app} (SDXL) and \ref{fig:quali-app-sana} (SANA). 
The results illustrate that our proposed DTI consistently generates outputs that accurately align with the provided text prompts, even in challenging cases where the baseline methods fail to do so.

% + LoRA 
Our DTI serves as a drop-in replacement for TI, enhancing the model's performance when combined with LoRA. The qualitative results in Figure~\ref{fig:quali-lora} demonstrate that DTI consistently generates outputs that both precisely follow the text prompt and accurately capture the subject's details.
%%%%%%%%%%%%%%%%%%%%%%%%%%%%% FIGURE %%%%%%%%%%%%%%%%%%%%%%%%%%%%%%%
\begin{figure*}[htp!]
    \centering
    \includegraphics[width=\linewidth]{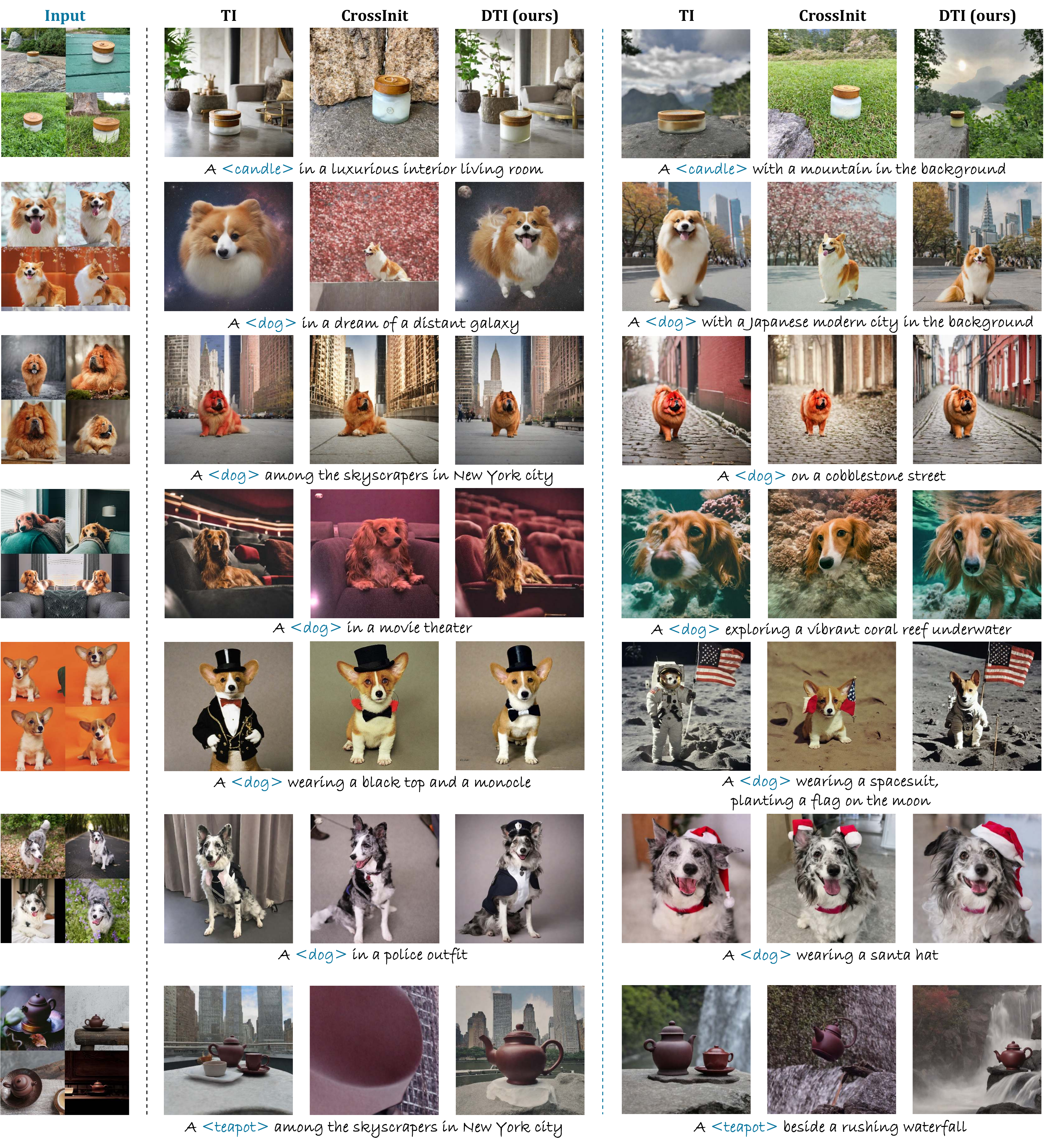}
    \caption{\textbf{Qualitative results with SDXL.} Here, we provide more qualitative comparisons with TI and CrossInit. Our DTI consistently generates results that precisely reflect the user text prompts, while maintaining subject similarity.}
    \label{fig:quali-app}
\end{figure*}
%%%%%%%%%%%%%%%%%%%%%%%%%%%%% FIGURE %%%%%%%%%%%%%%%%%%%%%%%%%%%%%%%
%%%%%%%%%%%%%%%%%%%%%%%%%%%%% FIGURE %%%%%%%%%%%%%%%%%%%%%%%%%%%%%%%
\begin{figure*}[htp!]
    \centering
    \includegraphics[width=\linewidth]{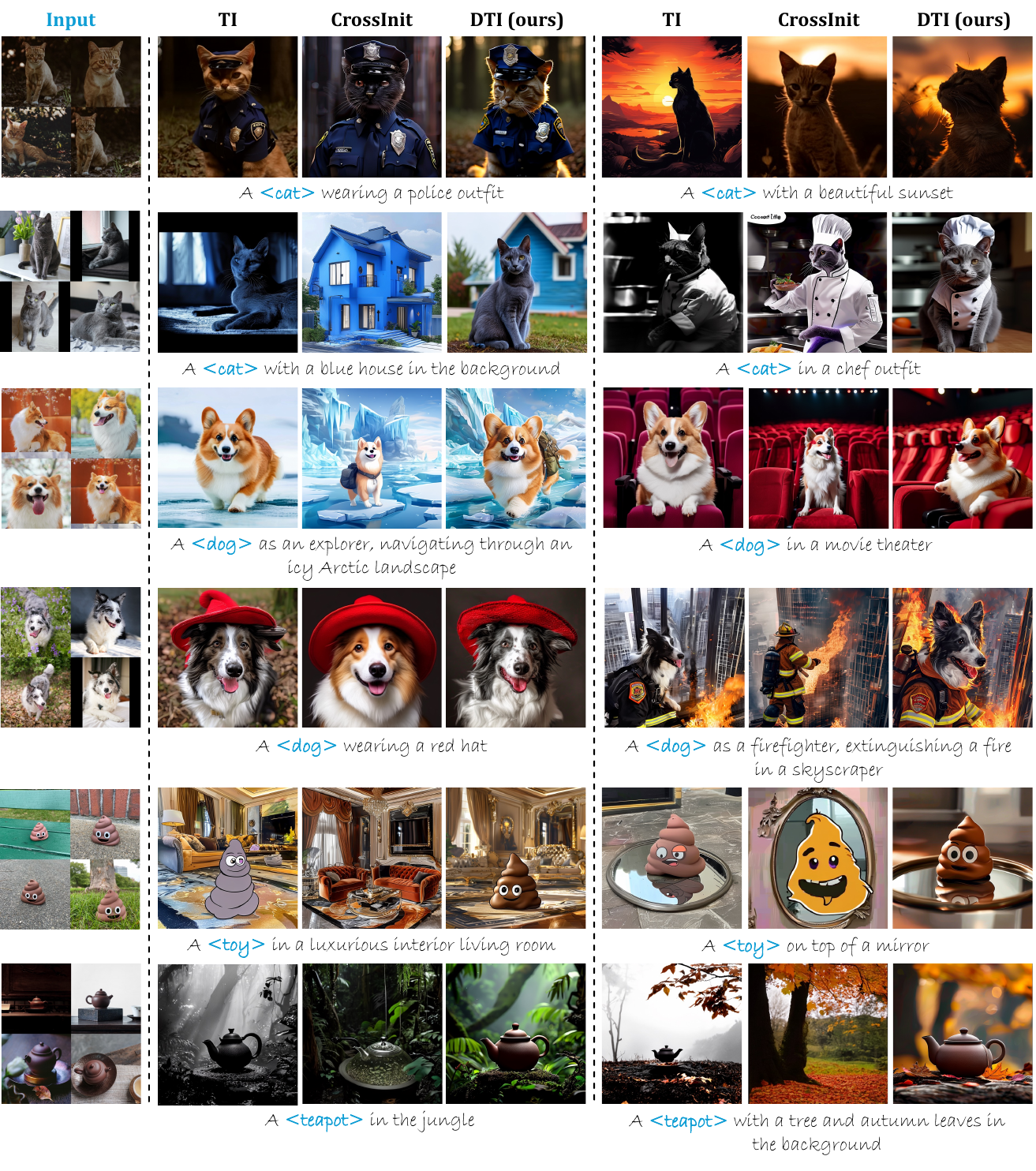}
    \caption{\textbf{Qualitative results with SANA1.5-1.6B.} Here, we provide more qualitative comparisons with TI and CrossInit on SANA. Our DTI consistently generates results that precisely reflect the user text prompts, maintaining subject similarity.}
    \label{fig:quali-app-sana}
\end{figure*}
%%%%%%%%%%%%%%%%%%%%%%%%%%%%% FIGURE %%%%%%%%%%%%%%%%%%%%%%%%%%%%%%%
\begin{figure*}[htp!]
    \centering
    \includegraphics[width=0.9\linewidth]{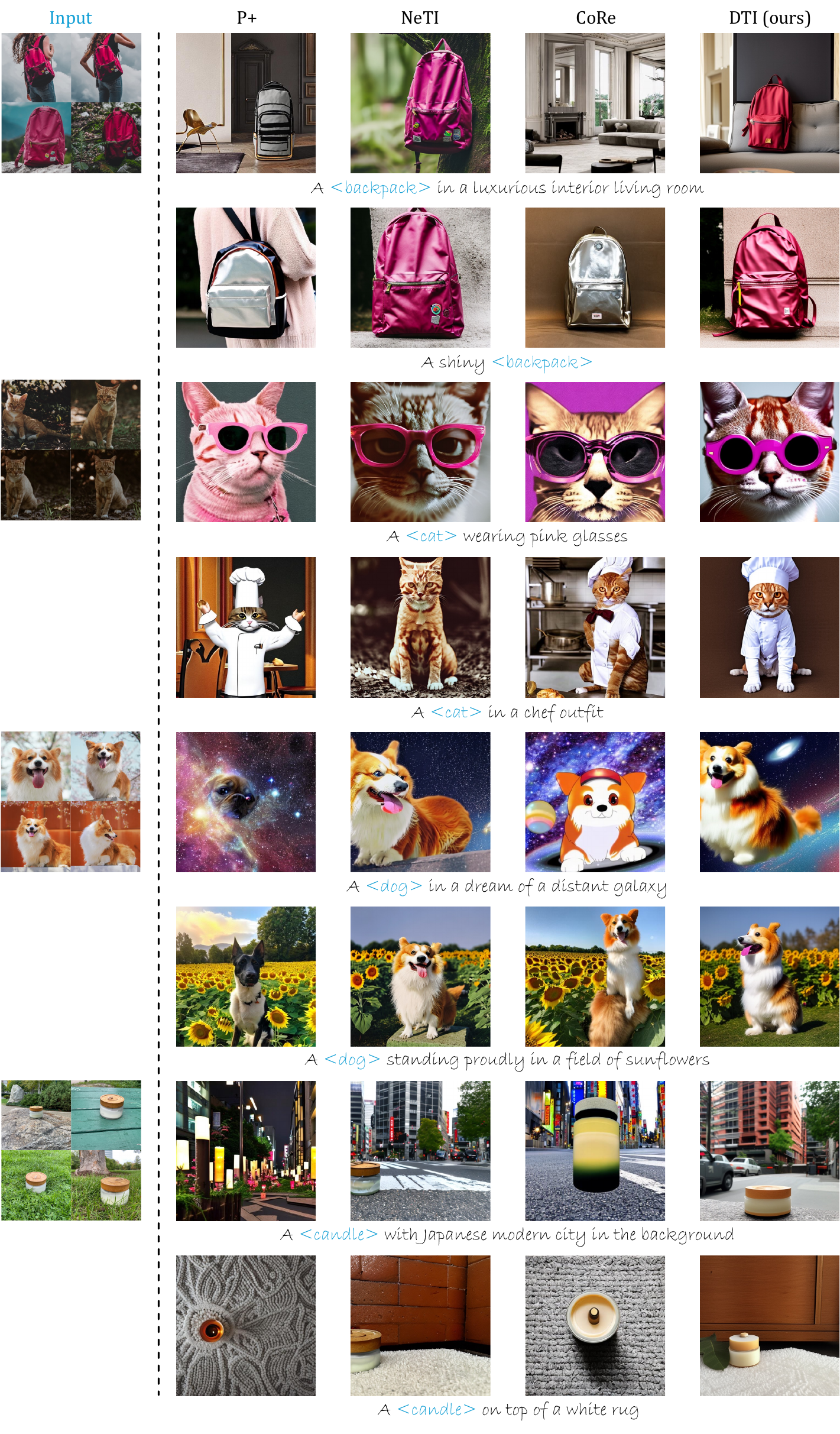}
    \caption{\textbf{Comparison with additional baselines.} We provide qualitative comparisons against additional TI-enhancing methods—P+, NeTI, and CoRe. Because these baselines are built on SD2.1-base, we apply DTI using the same pre-trained backbone to ensure fairness. The results demonstrate that DTI attains higher text fidelity while maintaining subject similarity.}
    \label{fig:rebuttal-2}
\end{figure*}
%%%%%%%%%%%%%%%%%%%%%%%%%%%%% FIGURE %%%%%%%%%%%%%%%%%%%%%%%%%%%%%%%
\begin{figure*}[htp!]
    \centering
    \includegraphics[width=\linewidth]{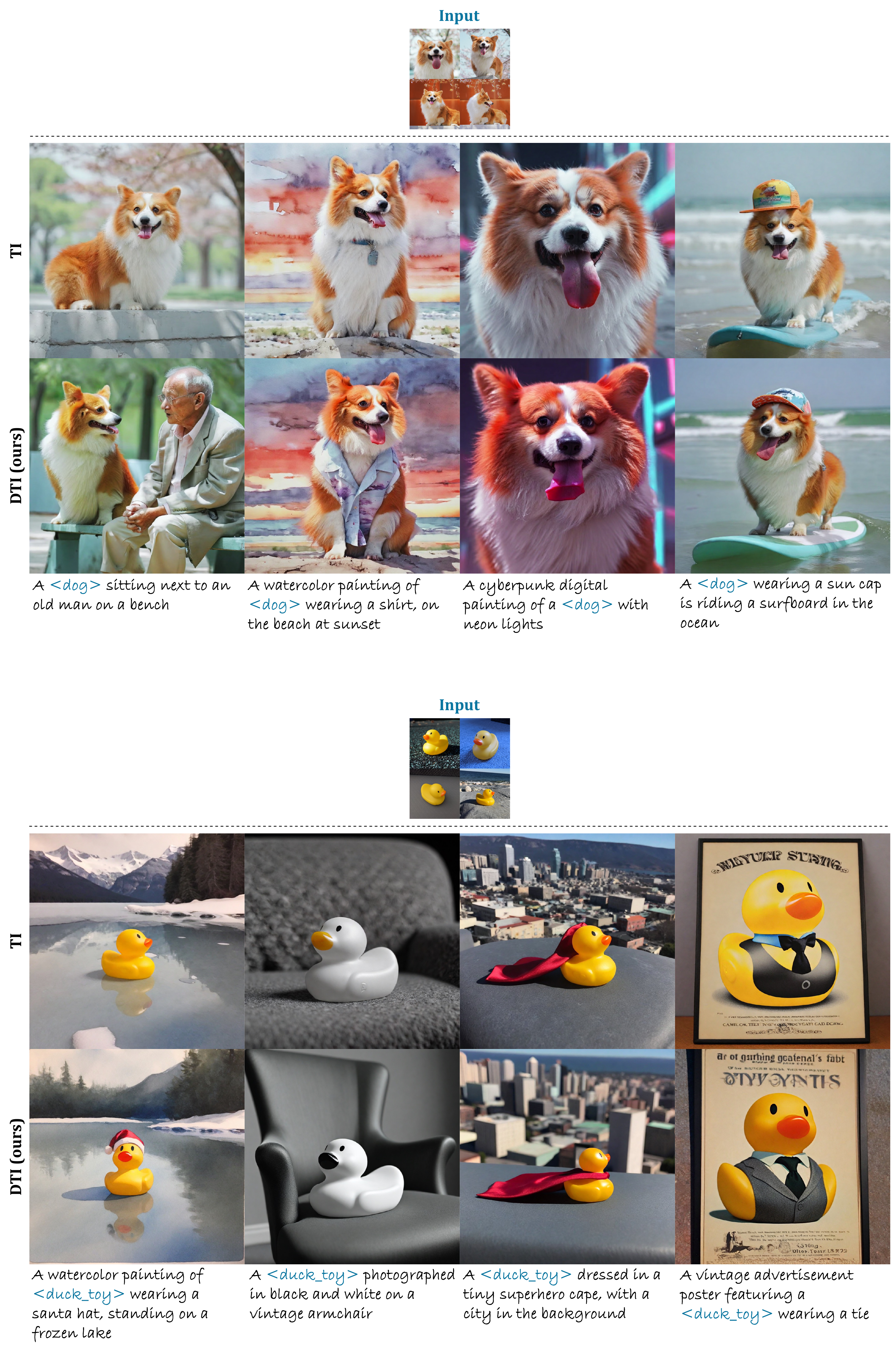}
    \caption{\textbf{Qualitative results with TI/DTI with LoRA on SDXL.} We provide a qualitative comparison of TI and DTI when combined with LoRA-based fine-tuning (rank 32). DTI consistently improves the text prompt fidelity compared to TI.}
    \label{fig:quali-lora}
\end{figure*}
%%%%%%%%%%%%%%%%%%%%%%%%%%%%% FIGURE %%%%%%%%%%%%%%%%%%%%%%%%%%%%%%%

\subsection{More results on applications}
\label{app:applications}

%%%%% FIGURE %%%%%%%%%%%%%%%%%%%%%%%%%%%%%%%
\begin{figure}[t]
    \begin{minipage}[t]{0.51\textwidth}
        \centering
        \includegraphics[width=\linewidth]{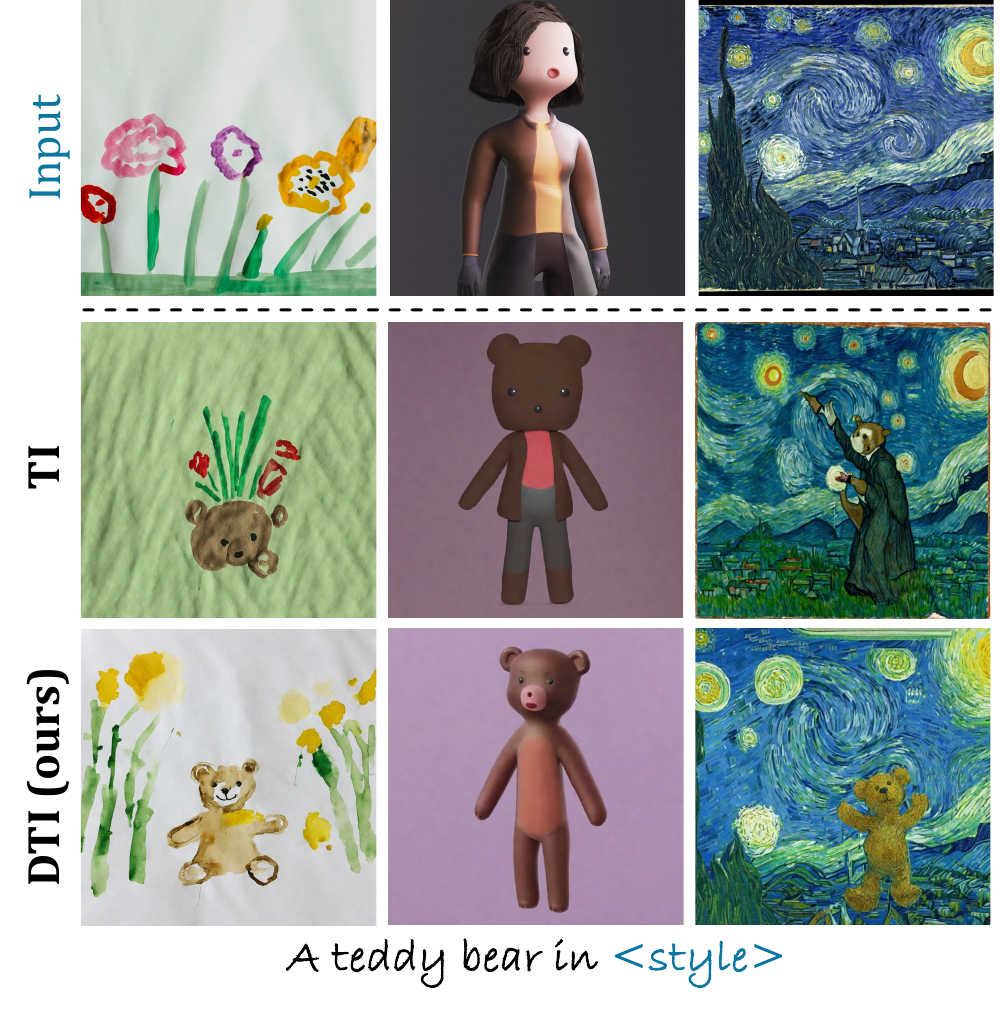}
        \caption{\textbf{Stylization.} Qualitative comparison of personalization with diverse style inputs.}
        \label{fig:style}
    \end{minipage}%
    \hfill
    \begin{minipage}[t]{0.47\textwidth}
        % \vspace{-0.1mm}
        \centering
        \includegraphics[width=\linewidth]{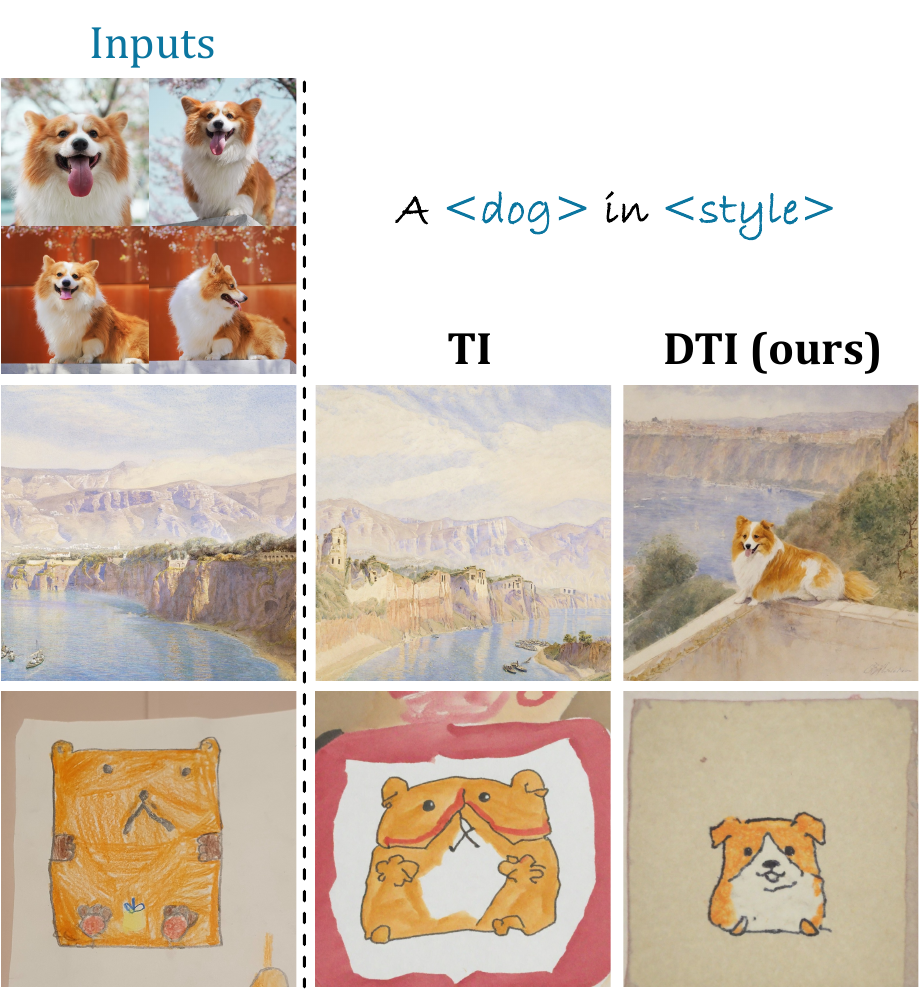}
        \caption{\textbf{My subject in my style.} Qualitative comparison of subject-style mixing within the same prompt.}
        \label{fig:mymy}
    \end{minipage}
\end{figure}
%%%%%%%%%%%%%%%%%%%%%%%%%%%%%%%%%%%%%%%%%%%%%%%%%%%%%%%%%%%%

\para{Stylization.}
We explore the combination of personalized subject embeddings and style embeddings. Our method, DTI, consistently generates images that accurately reflect both the personalized subject and the specified style. 
In contrast, TI frequently fails in this task, either by omitting the subject altogether (top row) or by inadequately capturing the intended style or subject details (bottom row) of Figure~\ref{fig:style}.

\para{My object in my style.}
We also compare our results in simultaneous generation of personalized subject and style. 
The results demonstrated in Figure~\ref{fig:mymy} show that DTI successfully generates outputs that are faithful to both subject and style, while TI fails to.

\para{Face personalization.}
To evaluate and showcase the capability of our DTI method in face personalization, we conducted experiments using randomly selected faces from the FFHQ dataset~\citep{karras2019style} as well as faces generated by DALL$\cdot$E~\citep{ramesh_zero-shot_2021}. 

Since CrossInit specifically focuses on facial personalization, we compare TI, CrossInit and our DTI on this task. Given that CrossInit does not explicitly provide hyperparameters (including learning rate) tailored for SDXL, we performed a grid search across various learning rates. Our empirical results indicated that the learning rate used by TI yielded reasonable performance for CrossInit as well. Figure~\ref{fig:face-app} illustrates a comparison between the three methods, demonstrating that all methods perform effectively for facial personalization. Nevertheless, as the complexity of text prompts increases (rows depicted in the left columns), the baseline methods struggle to accurately reflect all described components of the prompts. In contrast, our DTI method consistently captures the critical components precisely, demonstrating superior performance in achieving enhanced textual fidelity.

%%%%%%%%%%%%%%%%%%%%%%%%%%%%% FIGURE %%%%%%%%%%%%%%%%%%%%%%%%%%%%%%%
% -.- %
\begin{figure*}[htp!]
    \centering
    \includegraphics[width=\linewidth]{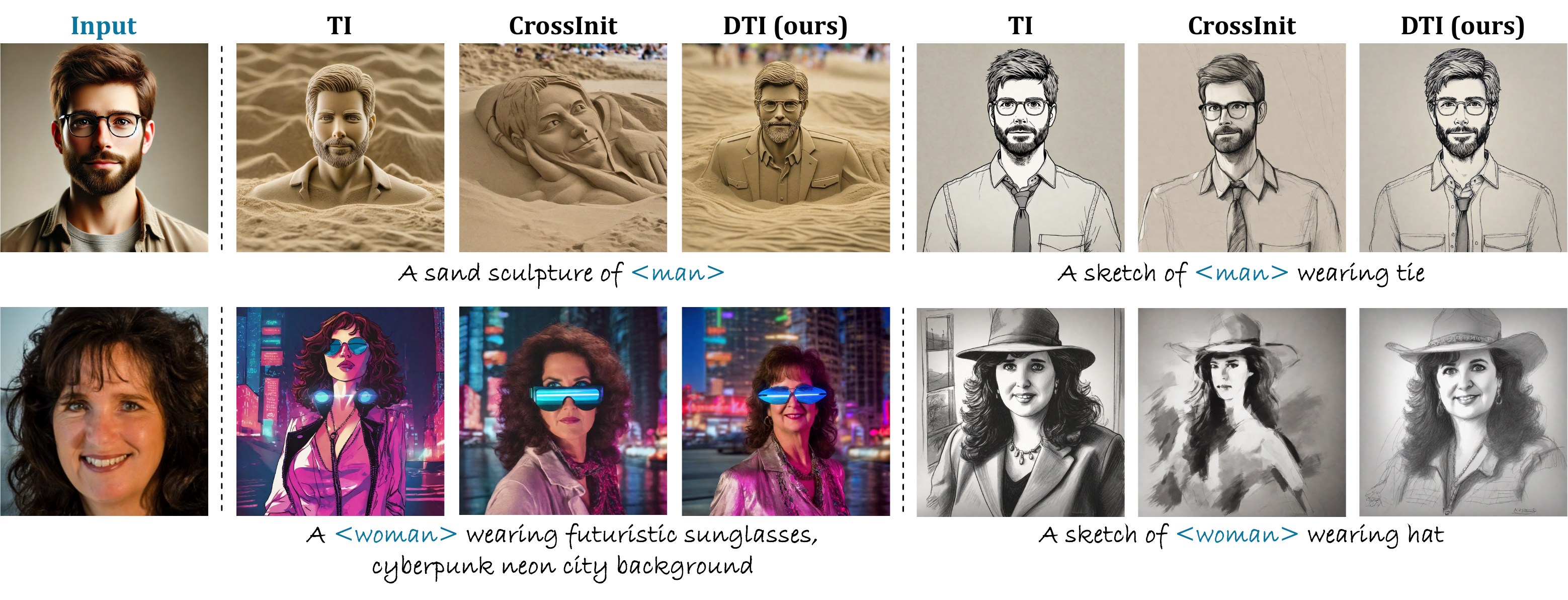}
    \caption{\textbf{Comparison of face personalization methods.} We compare our method and Textual Inversion (TI) against CrossInit, which specifically targets face personalization. To prevent bias from celebrity faces, we evaluate personalization using two alternative sources: images generated by DALL$\cdot$E~\citep{ramesh_zero-shot_2021} (top row) and randomly selected images from the FFHQ~\citep{karras2019style} (bottom row).}
    \label{fig:face-app}
\end{figure*}

\section{Additional Experiments} 
We present additional experimental results in Figures~\ref{fig:rebuttal-slerp}, \ref{fig:rebuttal-1}, \ref{fig:rebuttal-3}, and \ref{fig:rebuttal-4}. Specifically, Figure~\ref{fig:rebuttal-slerp} compares TI using SLERP against LERP, justifying our choice of the latter. In Figure~\ref{fig:rebuttal-1}, we present an ablation study on magnitude settings. While our DTI uses the mean value of the entire vocabulary as the default, we further investigate initializing with the specific category describing the subject (e.g., cat). We demonstrate that minor variations in magnitude do not significantly alter the outcome. Figure~\ref{fig:rebuttal-3} evaluates our DTI in multi-concept scenarios, illustrating both successful outcomes and limitations. Finally, we analyze specific failure cases of our method in Figure~\ref{fig:rebuttal-4}.

\begin{figure*}[htp!]
    \centering
    \includegraphics[width=\linewidth]{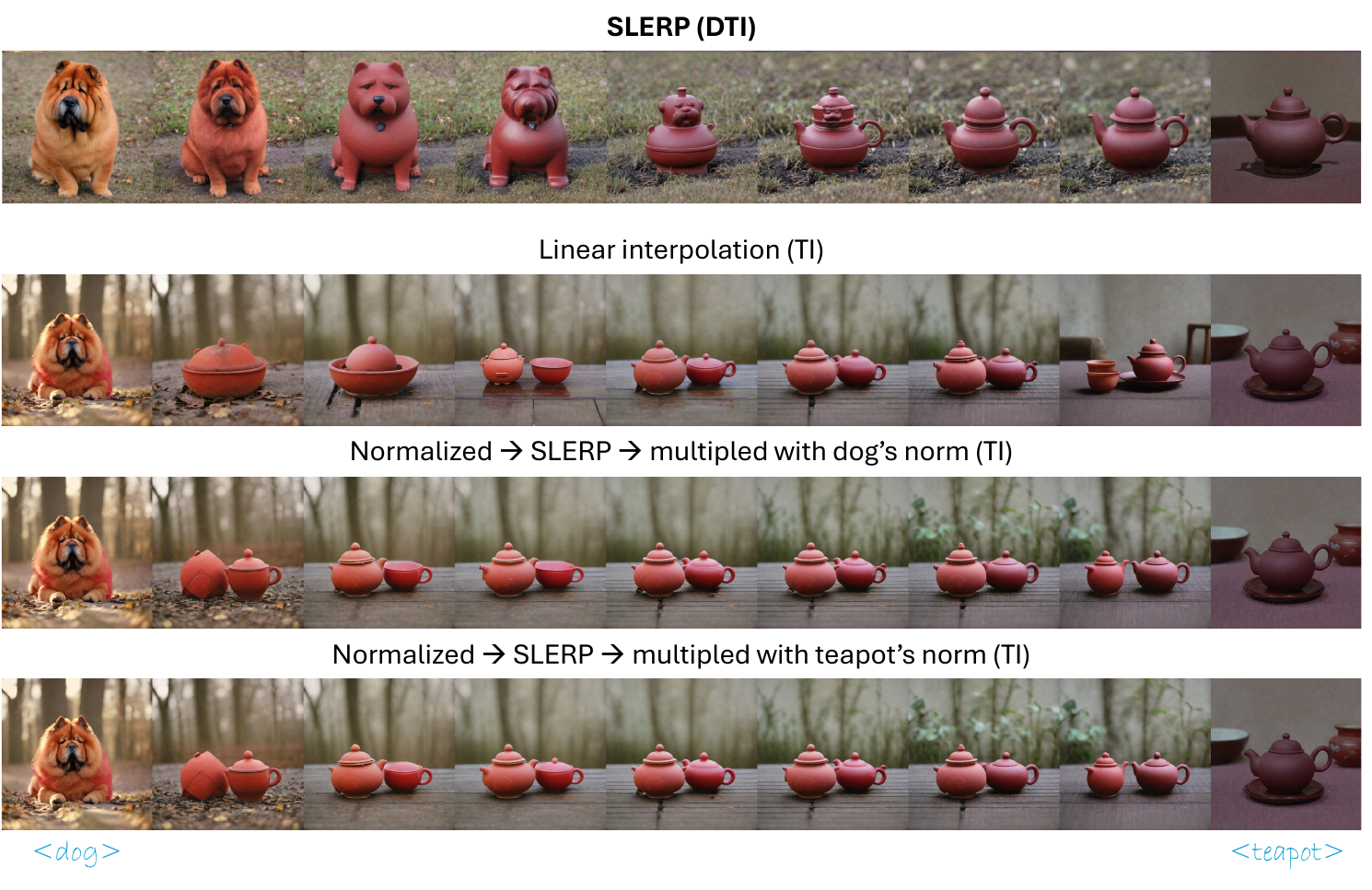}
    \caption{\textbf{Interpolation options for TI.} We compare several interpolation options for TI, including linear interpolation, SLERP with normalization and adjusted norms. While these approaches exhibit minor differences in behavior, none produce smooth transitions between concepts. In contrast, our DTI with SLERP achieves noticeably smoother and more consistent interpolations.}
    \label{fig:rebuttal-slerp}
\end{figure*}

\begin{figure*}[htp!]
    \centering
    \includegraphics[width=\linewidth]{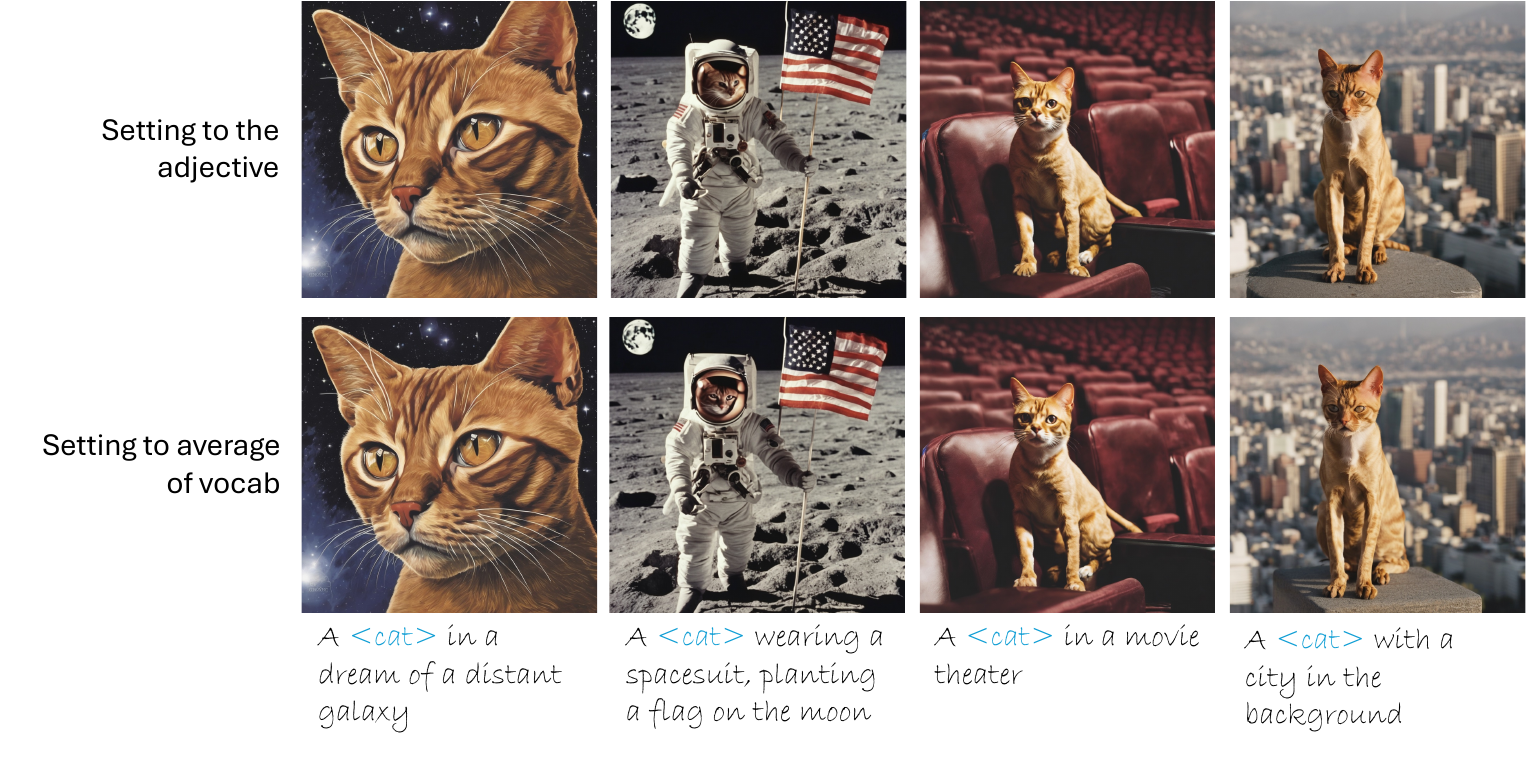}
    \caption{\textbf{Ablation on magnitude settings.} For consistency and ease of use, all magnitudes in this paper are set to the average value computed over the model’s vocabulary (see ablation in Table 3). To evaluate the effect of using concept-specific magnitudes (e.g., the magnitude of `cat' for the concept \texttt{<}cat\texttt{>}), we provide ablation results under different magnitude settings. The results show that small deviations from the default magnitude do not lead to noticeable differences in output quality.}
    \label{fig:rebuttal-1}
\end{figure*}

\begin{figure*}[htp!]
    \centering
    \includegraphics[width=\linewidth]{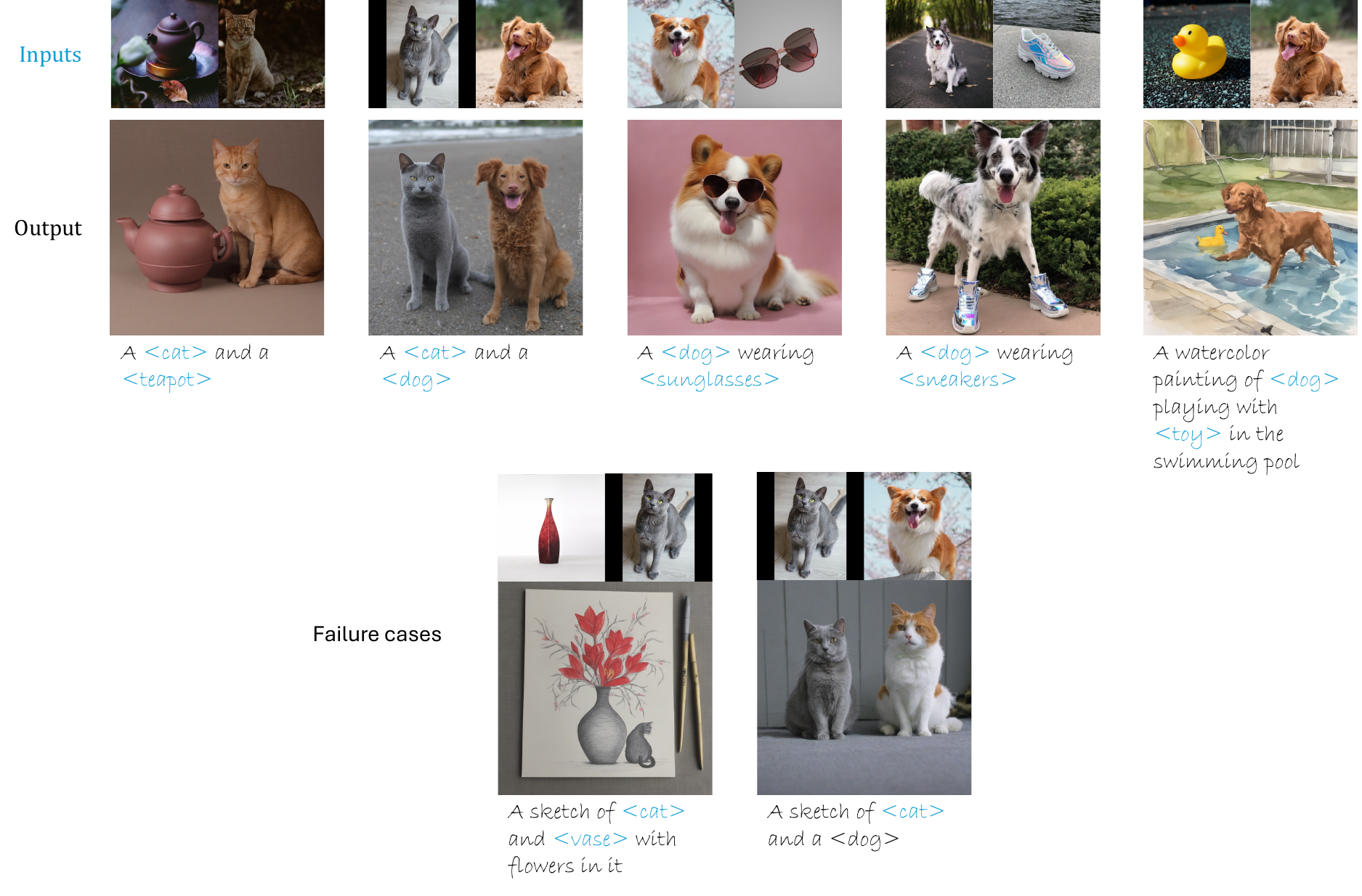}
    \caption{\textbf{Multi-concept experiments.} We further evaluate DTI by combining multiple learned concepts within a single prompt. The results demonstrate that DTI can successfully integrate multiple concepts, while the second column shows failure cases exhibiting attribute binding issues.}
    \label{fig:rebuttal-3}
\end{figure*}

\begin{figure*}[htp!]
    \centering
    \includegraphics[width=\linewidth]{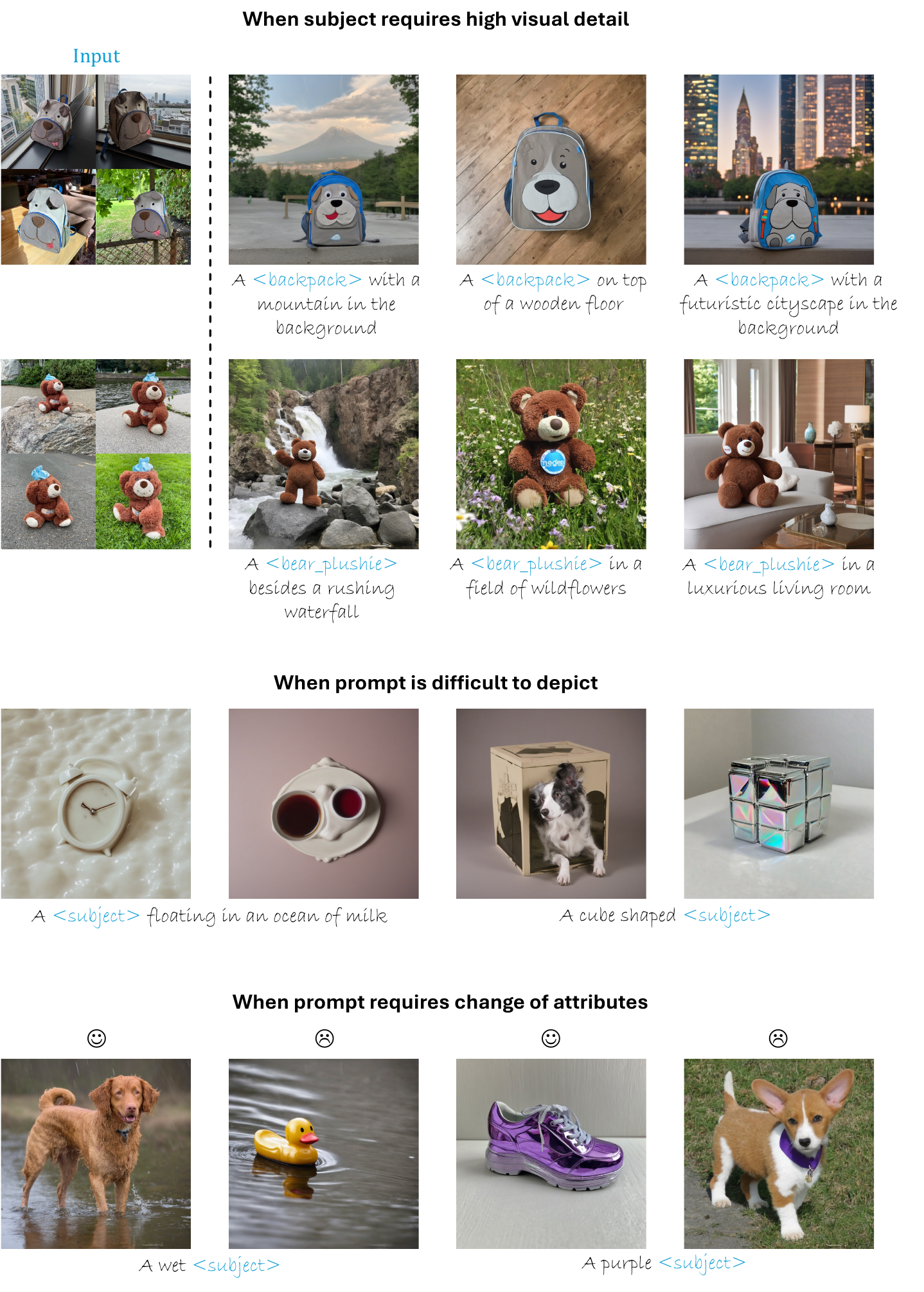}
    \caption{\textbf{Failure cases.} We present examples of three representative failure modes: (1) subjects that require high visual detail, (2) prompts that are vague or difficult to faithfully depict, and (3) prompts that involve attribute modification (e.g., color changes).}
    \label{fig:rebuttal-4}
\end{figure*}

\section{Societal impacts}
The rapid advancement of text-to-image diffusion models, especially in the domain of personalization techniques, raises important societal considerations. In particular, the ease of generating highly specific and detailed images can raise concerns related to copyright infringement, as personalized generative models may inadvertently or intentionally reproduce objects protected by intellectual property laws. Therefore, we note that it is important for users and distributors of the model to develop comprehensive awareness and implement guidelines addressing copyright boundaries, fair use, and ethical content generation. Moreover, we note that, since our method does not modify the underlying parameters of the generative model but solely adjusts the token embeddings that capture personalized concepts, the quality of generated images inherently depends on the capabilities of the underlying text-to-image model.
%%%%%%%%%%%%%%%%%%%%%%%%%%%%%%%%%%%%%%%%%%%%%%%%%%%%%%%%%%%%

\end{document}